\renewcommand\footnotetextcopyrightpermission[1]{}
\def\eqref#1{equation~\ref{#1}}
\def\1{\bm{1}}
\DeclareMathAlphabet{\mathsfit}{\encodingdefault}{\sfdefault}{m}{sl}
\SetMathAlphabet{\mathsfit}{bold}{\encodingdefault}{\sfdefault}{bx}{n}
\newcommand{\E}{\mathbb{E}}
\newcommand{\ea}{{et al.}\xspace}
\newcommand{\parab}[1]{\vspace{0.05in}\noindent\textbf{#1}}
\newcommand{\mycomment}[1]{}
\newcommand{\labeltext}[2]{%
  \@bsphack
  \csname phantomsection\endcsname 
  \def\@currentlabel{#1}{\label{#2}}%
  \@esphack
}
\begin{document}
\title[\textsc{Plume}: A Framework for High Performance Deep RL Network Controllers]{\textsc{Plume}: A Framework for High Performance Deep RL Network Controllers via Prioritized Trace Sampling}


\author{\small Sagar Patel\textsuperscript{1}}
\author{\small Junyang Zhang\textsuperscript{1}}
\author{\small Sangeetha Abdu Jyothi\textsuperscript{1, 2}} 
\author{\small Nina Narodytska\textsuperscript{2}}
\affiliation{
    \institution{\\\textsuperscript{1}University of Califoria, Irvine}
    \city{Irvine}
    \state{CA}
    \country{USA}
}
\affiliation{
    \institution{\textsuperscript{2}VMware Research}
    \city{Palo Alto}
    \state{CA}
    \country{USA}
    }
\email{sagar.patel@uci.edu}


\renewcommand{\shortauthors}{Sagar Patel, Junyang Zhang, Sangeetha Abdu Jyothi, and Nina Narodytska}

\begin{abstract}

Deep Reinforcement Learning (DRL) has shown promise in various networking environments. However, these environments present several fundamental challenges for standard DRL techniques. They are difficult to explore and exhibit high levels of noise and uncertainty. Although these challenges complicate the training process, we find that in practice we can substantially mitigate their effects and even achieve state-of-the-art real-world performance by addressing a factor that has been previously overlooked: the skewed input trace distribution in DRL training datasets. 

We introduce a generalized framework, \textit{Plume}, to automatically identify and balance the skew using a three-stage process. First, we identify the critical features that determine the behavior of the traces. Second, we classify the traces into clusters. Finally, we prioritize the salient clusters to improve the overall performance of the controller. Plume seamlessly works across DRL algorithms, without requiring any changes to the DRL workflow. We evaluated Plume on three networking environments, including Adaptive Bitrate Streaming, Congestion Control, and Load Balancing. Plume offers superior performance in both simulation and real-world settings, across different controllers and DRL algorithms. For example, our novel ABR controller, \textit{Gelato} trained with Plume consistently outperforms prior state-of-the-art controllers on the live streaming platform Puffer for over a year. It is the first controller on the platform to deliver statistically significant improvements in both video quality and stalling, decreasing stalls by as much as $75\%$.

\end{abstract}

\maketitle

\section{Introduction}

Control in real-world networks is a hard-to-tackle problem. Today, learning solutions hold promise in a broad range of network environments. Deep Reinforcement Learning (DRL) controllers have shown encouraging results in congestion control (CC)~\cite{aurora, orca}, Adaptive Bitrate Streaming (ABR)~\cite{pensieve, pensieveRealWorld}, load balancing (LB)~\cite{mao2018variance}, cluster scheduling~\cite{decima}, network traffic optimization~\cite{auto} and network planning~\cite{neuroplan}, to name a few. 

In systems and networking environments, unlike traditional DRL environments such as gaming or robotics, there exists an unpredictable underlying input process. For instance, in congestion control, the behavior of other traffic sharing the network path determines if congestion occurs. These processes are referred to as ``inputs''~\cite{mao2018variance}. During training, they are replayed using a dataset of \textit{input traces}, or system logs. Such input-driven DRL environments have several characteristics that make DRL training difficult. 
First, input-driven RL environments require more extensive exploration during training~\cite{mao2019park}. Second, the dependence of the environment on external inputs such as Internet traffic introduces high levels of uncertainty and noise~\cite{mao2018variance}. These challenges together make exploration and learning in network environments highly non-trivial, causing several prior work~\cite{pensieveRealWorld, fugu, mao2019park} to conclude that addressing them was essential for high performance in the real world. 

Our analysis discovers a key factor exacerbating the impact of these challenges in practice: the skewed distribution of input traces in training datasets. 
Skew leads to limited exploration of rare or tail-end traces, introduces significant errors in feature learning for these tail-end traces, and results in noisy learning by introducing updates consisting of a narrow set of traces. Moreover, the performance of these tail-end traces is often less than optimal, in contrast to the ``common'' traces which are heavily optimized. Thus, enhancing tail-end trace performance becomes critical for improving the overall controller performance~\cite{datadrivennetworking}. 
Unfortunately, such skew is prevalent in input-driven environments. For example, during an eight-month data collection period, the video streaming platform Puffer~\cite{fugu} recorded that low-bandwidth input traces made up less than $20\%$ of the total traces, with only $4\%$ of these traces having any stalls.
Therefore, addressing skew is essential for (a) avoiding amplifying inherent challenges in learning, and (b) improving the overall performance of the controller across both ``common'' and tail-end input traces.  




While techniques for addressing data skew are prevalent in various contexts~\cite{leevy2021mitigating, dong2021multi, liang2019empirical, zhang2021mimicnet, schaul2015prioritized, bucketper, multiagentper}, standard supervised learning solutions such as oversampling or undersampling specific labeled classes do not apply in the case of reinforcement learning, where the controller learns using states, actions and rewards. The few solutions designed specifically for DRL are inadequate for network controllers because they are restricted to state-level prioritization and fail to capture the trace-centric nature of networking environments (\S~\ref{subsec:PER}). Thus, to effectively address this skew, we introduce a novel approach directly targeting the \textit{input traces} in networking environments.


Input traces, which represent logs of time-dependent complex processes, lack a conventional mechanism to identify and balance the skew with. These traces have no features or labels and do not directly contribute to a loss function. To systematically tackle skew and improve overall controller performance, a mechanism to identify and balance the skew in input-driven environments is needed. To do so, in this work, we introduce a generalizable framework, Plume. Plume employs an automated three-stage process. \textit{Critical Feature Identification:} We automatically determine the critical trace features that enable us to identify the traces. \textit{Clustering:} We employ clustering to convert the critical features into salient identifiers. \textit{Prioritization:} In this stage, we prioritize the clusters, such as to expose the controller to traces where it can learn the most from (\S~\ref{sec:design}).


Using Adaptive Bitrate Streaming, Congestion Control and Load Balancing as representative network applications, we show that Plume offers consistently high performance across a wide range of controllers---across different environments, diverse trace distributions, and multiple DRL algorithms. We also introduce Gelato, a novel DRL ABR controller that, when trained with Plume, offers state-of-the-art performance on the real-world live streaming platform Puffer~\cite{fugu}. It is the first controller on the platform that offers statistically significant improvements in both video quality and stalling. It outperforms the previous state-of-the-art controllers, CausalSim~\cite{alomar2023causalsim} and Fugu~\cite{fugu}, reducing stalls by $75\%$ and $78\%$ respectively. 

To further analyze prioritization strategies, we also introduce a controlled evaluation environment, \textit{TraceBench}. TraceBench is a simplified Adaptive Bitrate environment with parametrically generated traces. Parameterized trace generation enables users to generate a wide range of test trace distributions in a controlled and accurate manner, which can then serve in thoroughly evaluating sampling strategies. 

In summary, we make the following contributions:
\vspace{1mm}
\begin{itemize}[leftmargin=*,nolistsep]
    \item We systematically study an overlooked aspect of DRL training, skewed datasets, and find that they can have a surprisingly large impact on performance.
    \item We propose Plume as a generalizable framework for handling skewed datasets and improving the performance of input-driven DRL controllers. 
    \item We demonstrate the superior performance and robustness of Plume in Congestion Control, Load Balancing, and Adaptive Bitrate Selection, across multiple RL algorithms in simulation and real-world settings. 
    \item We introduce Gelato, a new ABR controller. Plume-trained Gelato, deployed on the real-world Puffer platform~\cite{fugu} for more than a year, is the first controller that achieves signficaint improvements in both video quality and stalling, reducing stalling by as much as $75\%$ over the previous state-of-the-art.
    \item We present a new benchmarking tool, TraceBench, and evaluate the prioritization techniques across a wide range of trace distributions; demonstrating that Plume is robust across them.
\end{itemize}

\noindent We will open-source the code of the Plume library, TraceBench, our DRL environments, and our state-of-the-art ABR controller, Gelato.

\begin{figure*}[t!]
	\centering
    \centering
	\includegraphics[width=0.4\linewidth]{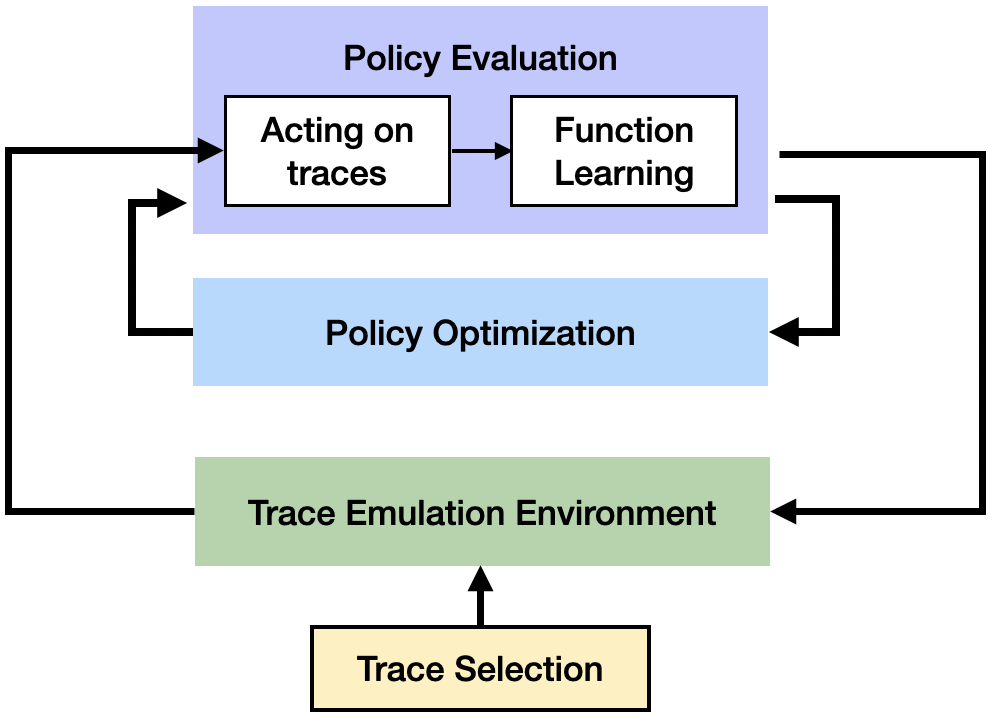}
	\caption{\textbf{RL Training Loop}: The training loop of DRL algorithms in trace-driven environments.}
	\label{fig:train_loop}
\end{figure*}

\section{Background}
In this section, we give a brief overview of reinforcement learning training and our representative applications---adaptive video streaming, congestion control, and load balancing.

\subsection{Reinforcement Learning Preliminiaries}
\label{sebsec:rlproblem}
In Deep Reinforcement learning (DRL), an \textit{agent} interacts with an \textit{environment}. At each timestep, the agent receives the current system state $s_t$, and takes an action $a_t$, drawn from its policy, $\pi (a | s_t)$. The environment plays the action out and \textit{transitions} to the next state $s_{t+1}$, giving the agent a reward $r_t$~\cite{sutton2018reinforcement, SpinningUp2018, silver2015}.

In network environments, non-deterministic network conditions are the primary sources of noise and uncertainty. These conditions determine the environment's response to the controller's chosen actions. For example, in congestion control, external traffic can dictate whether congestion will occur. 

Formally, these conditions are called ``inputs'', and input-driven environments form an Input-Driven Markov Decision Process~\cite{mao2018variance}, defined by the tuple $(S, A, Z, P_s, P_z, r, \gamma)$. Here, $S$ denotes the set of states, $A$ represents the set of actions, $Z$ is the set of training input traces, $P_s$ is the state transition function, $P_z$ is the input transition function, $r$ is the reward function, and $\gamma$ is the discount.

The state transition function $P_s(s_{t+1} | s_t, a_t, z_{t+1})$ defines the probability distribution of the next state $s_{t+1}$ given the current state $s_t$, action $a_t$, and upcoming input value $z_{t+1}$. Meanwhile, the input transition function $P_z(z_{t+1} | z_{t})$ defines the probability of the next input value based on the current one, leading to an effective transition function given by $P_s(s_{t+1} | s_t, a_t, z_{t+1}) P_z(z_{t+1} | z_{t})$.

As depicted in Figure~\ref{fig:train_loop}, the DRL learning process aims to guide the policy $\pi$ towards higher cumulative reward through a loop involving two steps: a \textit{policy evaluation} step and a \textit{policy improvement} step~\cite{horgan2018distributed}. During policy evaluation, the agent assesses its current policy's performance by gathering experience through \textit{acting} in the environment and leveraging this experience in \textit{function learning}. Here, it updates its neural network to learn a form of the value function $v^\pi (s) = \E_{\pi} [G | s_0 = s]$, which is the expected return $G$ starting from state $s$, where $G$ is the discounted sum of rewards $G = \sum_{t=0}^{\infty} \gamma^t r_t$. Subsequently, in the policy improvement phase, the agent modifies policy $\pi$ to maximize $v^\pi$. Through this iterative process of estimating and maximizing the policy's value function, the agent learns in the environment.

\parab{On-policy and Off-policy DRL}. DRL algorithms are broadly divided into two categories based on their policy evaluation stages. \textit{On-policy RL algorithms} perform policy evaluation from scratch in each iteration, using only the data collected with the latest version of the policy for function learning~\cite{sutton2018reinforcement}. These algorithms have found wide application in networking~\cite{pensieve, aurora, mao2016resource}. On the other hand, \textit{Off-policy RL algorithms} continue to use data from older versions of the policy along with new data to improve sample efficiency. They maintain a window of environment transitions, described by the tuple $(s_t, a_t, r_t, s_{t+1})$, in a FIFO buffer known as Experience Replay~\cite{mnih2013playing}. Off-policy algorithms are similarly popular in networking, as used by \cite{orca, xu2023teal}.

\subsection{Environments}
\label{subsec:envionments}
In this paper, we use adaptive bitrate streaming, congestion control, and load balancing as representative networking environments.

\parab{Adaptive Bitrate Streaming}. In HTTP video streaming, the video is divided into short chunks and encoded, in advance, at multiple discrete bitrates. During streaming, the ABR algorithm is responsible for sequentially selecting the bitrate of each chunk to maximize the viewer's Quality of Experience (QoE). While streaming, the client also has a buffer to store chunks yet to be played. Typically, the QoE is measured with a numerical function that awards higher quality, and penalizes both quality fluctuations and rebuffering. The quality of a chunk may be denoted by its bitrate or by more complex measures such as Structural Similarity Index Measure (SSIM)~\cite{ssim}. 

\parab{Congestion Control}. Congestion Control (CC) algorithms are responsible for determining the most suitable transmission rate for data transfer over a shared network. Based on network signals such as round-trip time between the sender and receiver and the loss rate of packets, a CC algorithm estimates sending rate that maximizes throughput and minimizes loss and delay. 

\parab{Load Balancing}. A Load Balancing (LB) algorithm in a distributed cluster decides which server to serve a new job at, such as to minimize the job's total processing time. When a job arrives, the LB algorithm does not know how busy each server is or how long each server will take to process the job. To make a good decision, it uses data such as the time between job arrivals, the duration of past jobs, and the number of jobs already waiting at each server.  

\section{Motivation}
In this section, we discuss the challenges associated with training DRL controllers and how they are exacerbated by skewed training datasets. Then, we give a brief overview of current techniques used to handle skew and motivate the need for prioritized trace sampling. 

\subsection{Challenges with DRL Training}
Input-driven DRL training environments used in networking settings suffer from several overarching challenges.

\parab{Challenge 1: Needle-in-the-haystack exploration}. In input-driven environments, the majority of the state-action space presents little difference in reward feedback~\cite{mao2019park}. In this scenario, standard exploration techniques, which select a random action with $\epsilon$ probability and follow greedy actions otherwise, have a low chance of finding a successful policy. The complexity is further exacerbated by the imbalance in the training datasets, particularly the under-representation of rare or tail-end traces. Such traces are infrequently encountered by the controller, thereby further limiting the opportunity for the controller to discover successful strategies for them. However, performance in these tail-end traces can be crucial for higher overall performance of the controller~\cite{datadrivennetworking}.

\parab{Challenge 2: Noise and Uncertainty}. The network conditions, or inputs, determine the behavior of the environment and constitute the main source of uncertainty. For instance, when an Adaptive Bitrate controller chooses a bitrate, it operates without knowledge of the client's link bandwidth. This unobserved factor directly impacts the amount of time the client will wait for a data chunk. Such variability introduces noise into the learning process, creating a situation where identical states can yield widely different outcomes based on the network conditions~\cite{mao2018variance}. This variability or noise is particularly amplified when the distribution of network traces is skewed. In these cases, a single training iteration may inadequately represent the full spectrum of input traces, thereby leading to divergent or noisy updates. 

\parab{Other Challenges with skew}. Skew in the distribution of input traces presents challenges during the function learning phase of DRL training (Fig.~\ref{fig:train_loop}). Since states are dependent on these input traces, a skewed input distribution leads to a skewed state distribution. This imbalance in the state distribution degrades the performance of the neural network and makes it vulnerable to overfitting~\cite{johnson2020effects, ye2020identifying}.

\subsection{Towards Prioritizing Trace Sampling}
\begin{figure}[t]
    \centering
	\includegraphics[width=0.45\linewidth]{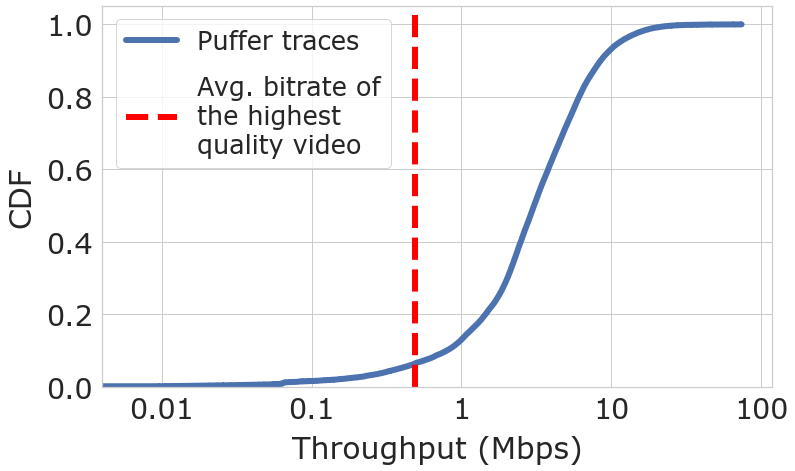}
	\caption{\textbf{Puffer Input Trace distribution}: Distribution of effective throughput of Puffer traces collected during the two-month period, Apr '21 - May '21. Less than $6.5\%$ of traces have average effective throughput below the average bitrate of the highest quality video. Each Puffer stream is a trace.}
    \labeltext{\arabic{figure}}{fig:throughput_dist}
\end{figure}

\label{subsec:PER}

Next, we discuss commonly used ML techniques for handling skew and establish the need for prioritized trace sampling in input-driven environments.

\parab{Prioritized Experience Replay (PER)}.
Off-policy DRL algorithms use a buffer to store past state transitions and apply Prioritized Experience Replay (PER)~\cite{schaul2015prioritized} to sample them during function learning. PER employs prioritization, also known as importance sampling, to prioritize state transitions based on their Temporal Difference error. The key idea is to focus on transitions with higher error, improving the controller's predictions where most needed rather than on the most common transitions. 

While PER is effective in traditional DRL settings~\cite{hessel2018rainbow, horgan2018distributed}, it is limited in addressing the challenges presented by skew in input-driven environments. The reason is that while PER addresses the state skew in the function learning phase, the skew in input traces additionally affects the \textit{acting} phase of the training loop (Fig.~\ref{fig:train_loop}). The controller has limited opportunity to act in tail-end traces. Without modifying which traces are selected during acting phase, PER cannot increase the frequency of exploration in tail-end traces or ensure a comprehensive evaluation across the entire input trace distribution. Consequently, the importance sampling PER uses is not sufficient to handle the skew of input traces in the dataset.

\begin{figure*}[t]
\centering

	\includegraphics[width=0.85\linewidth]{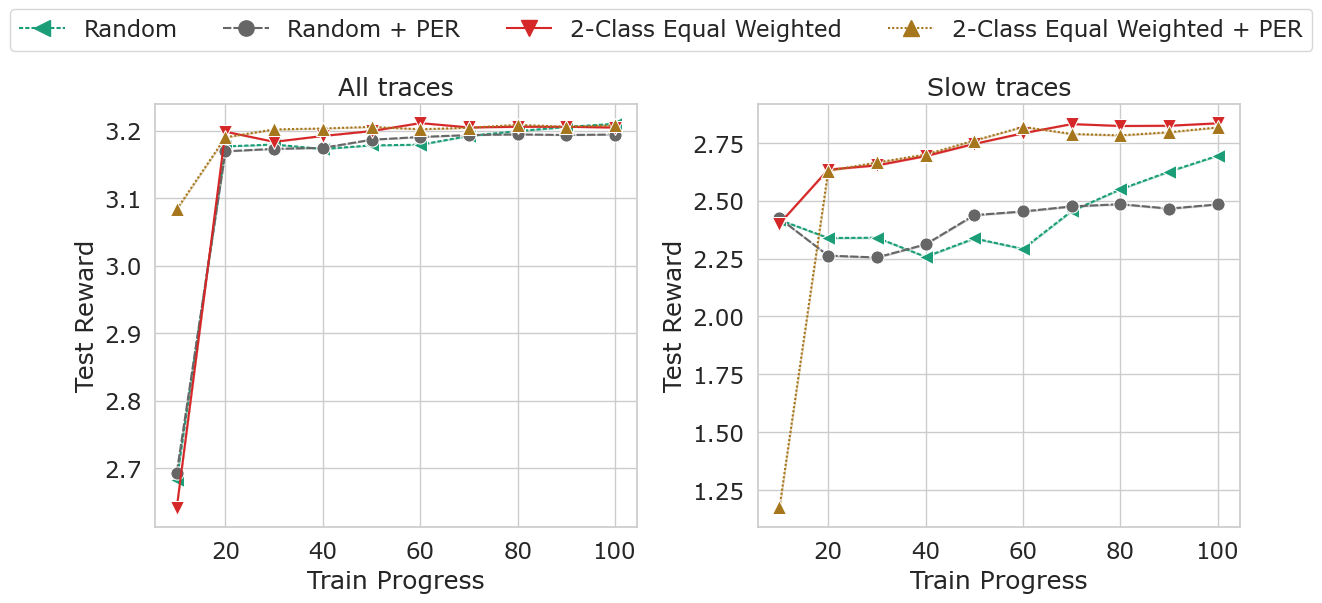}
	\caption{ \textbf{Comparing Prioritization Techniques}: Performance of sampling transitions (PER enabled/disabled) compared with sampling traces (Random vs. 2-Class Equal Weighted Trace Selection) on a DQN-variant of Cannoli ABR controller trained using the \textit{off-policy} algorithm, Ape-X DQN \cite{horgan2018distributed}. 2-Class Equal Weighted Trace Selection offers superior controller performance and training efficiency, while PER does not. $95\%$ confidence interval shown as error bands.}
	\labeltext{\arabic{figure}}{fig:apex_final}

\end{figure*}
\label{subsec:motivation_experiment}
\parab{Prioritized Trace Sampling}. We reexamine the DRL workflow and identify a more suitable location for prioritization. We put forward a simple training paradigm in input-driven environments: prioritizing \textit{trace sampling} during the acting step. With this, we can achieve high state-action space exploration and representative evaluation on all kinds of traces.

To test our hypothesis, we experiment by enabling prioritization at two points in the DRL workflow: sampling transitions in the experience buffer at the function learning step (PER enabled vs. disabled) and sampling input traces in the acting step (Random sampling vs. 2-Class Equal Weighted). 2-Class Equal Weighted is a simple input trace prioritization scheme that divides the traces from the Puffer Platform into two classes, those with mean throughput higher/lower than the highest quality bitrate, 0.98 Mbps (Figure~\ref{fig:throughput_dist}), and equally samples both classes. We evaluate the impact of each technique on a DQN variation of Gelato controller for ABR trained using the state-of-the-art algorithm Ape-X DQN~\cite{horgan2018distributed} (training settings detailed in \S~\ref{sec:gelato} and \S~\ref{subsec:real_world_settings}).

In Figure~\ref{fig:apex_final}, we observe that the simple 2-Class Equal Weighted gives the highest controller performance and training stability. By prioritizing the tail-end slow throughput traces, we achieved high performance in both all and slow network traces without compromising anything. Enabling PER does not significantly improve controller performance. Even though the replay buffer can store $2$ million transitions (over 5000 input traces), the controller performance falls short of the naive trace prioritization scheme. This highlights that the skew in the trace distribution cannot be easily overcome at the function learning step.
\begin{figure*}[t]
	\centering
	\includegraphics[width=\linewidth]{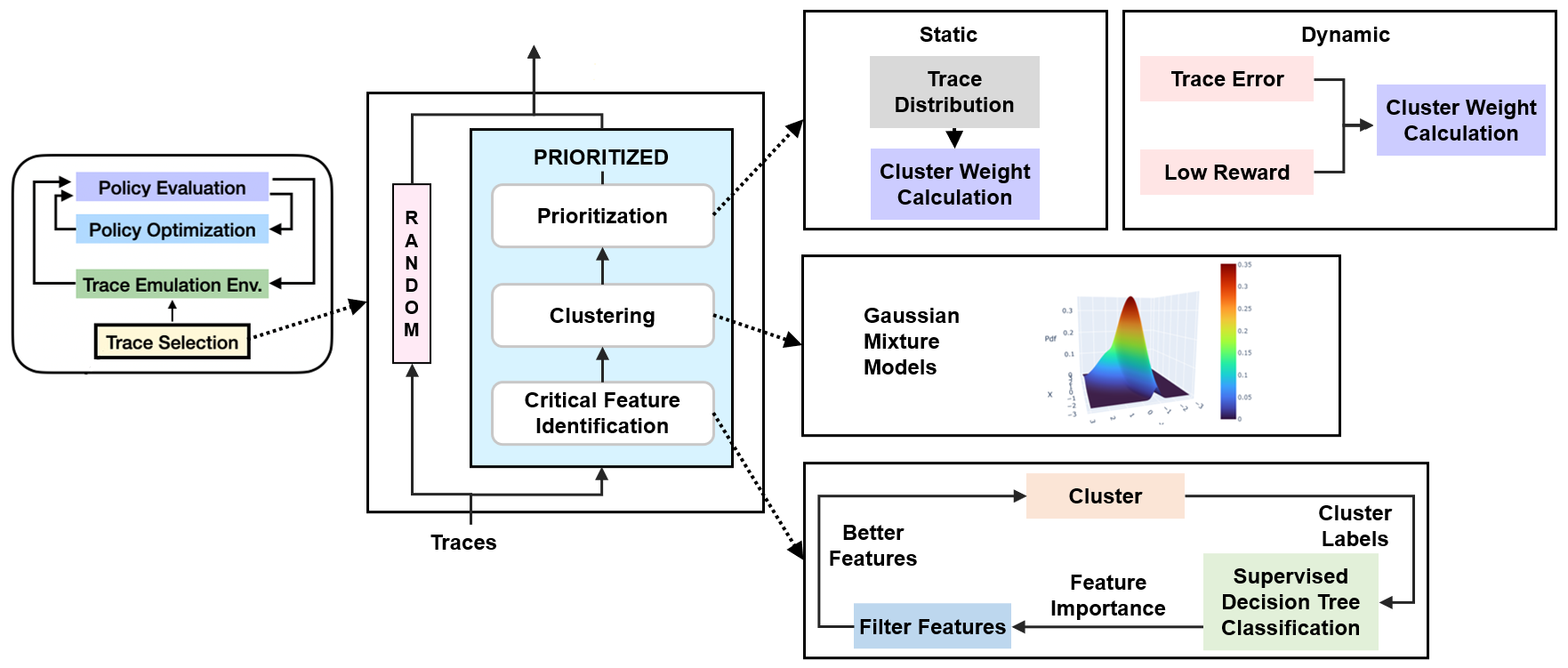}
	\caption{ \textbf{Plume System Diagram}: The Plume Workflow involves three key stages: (1) Critical Feature Identification, where we characterize the traces and their skew, (2) Clustering, where we try to simplify the prioritization problem by grouping traces, (3) Prioritization, where we observe the performance of the agent and attempt to prioritize important trace clusters.}
	\label{fig:trace_selection}
\end{figure*}

\section{Design}
\label{sec:design}
Toward improving the performance of DRL training by balancing skew, we put forward the idea that trace selection is the aptest location for prioritization. 

In order to balance the skew during trace selection, we take advantage of a key observation: input traces inherently correspond to users or workloads, with groups of them sharing similar characteristics.  To ensure a balanced representation of the underlying users, the dataset must contain a roughly uniform number of input traces across them. 
We define input traces to have a set of user attributes $\Phi = [\phi_1, \phi_2, ...]$ given by the function $\Phi = X(trace)$, where $X$ depends on the domain.
These features identify similarities between user traces, and play a key role in obtaining a balanced representation.


Plume is a systematic framework to automatically balance this skew in input traces. Plume allows the agent to have balanced exploration and stable learning updates. Figure~\ref{fig:trace_selection} gives an overview of the Plume workflow. Plume is implemented in the Trace Selection module which is responsible for supplying traces to the simulation environment. This module sits outside of the DRL training loop and is queried by the environment to get traces to replay. Plume has three key stages: critical feature identification, clustering, and prioritization. 

In the critical feature identification stage (\S~\ref{subsec:critical_features}), Plume identifies the attributes of the input traces. In the clustering stage (\S~\ref{subsec:clustering}), it simplifies the prioritization problem by clustering the attributes. Finally, in the Prioritization (\S~\ref{subsec:prioritization}) stage, Plume prioritizes the traces to balance input traces using one of two techniques: static or dynamic.  

\subsection{Critical Feature Identification}
\label{subsec:critical_features}
Input traces, which are time-dependent series of values that define complex external conditions, can be incredibly difficult to characterize and prioritize directly. Hence, the first step towards automated prioritization of traces is identifying the attributes $\Phi$ using critical feature identification.

To extract all features associated with the time series trace data, we rely on the popular feature extraction tool for the time series data, tsfresh~\cite{tsfresh}. We extract a large set of features $[\phi_1, \phi_2, ... \phi_n]$ broadly applicable to all input-driven DRL environments. However, because this large set of features may not be relevant to every application, we introduce an automated three-step process to narrow down to the critical ones, inspired by the idea of recursive feature elimination in supervised learning~\cite{scikit-learn-rfe}.

First, we start with the large set of features and apply clustering to create a fixed small number of clusters. This is denoted by $c = C([\phi_1, \phi_2, ... \phi_n])$, where $c$ is the cluster labels, and $C$ is the clustering function.

Second, we obtain the features most relevant in producing this mapping. To do so, we use the cluster labels $c$ and train decision trees based on the features $[\phi_1, \phi_2, ... \phi_n]$. With this training, we can compute the information gain $IG(c, \phi_i) = H(c) - H(c | \phi_i)$ for each feature $\phi_i$. Here, $H$ is the Shannon entropy of the cluster labels, which is a measure of the average level of ``uncertainty''.

Third, we eliminate features with the lowest $IG$ values. We continue this cycle of clustering, classification, and feature elimination until we are left with only the features that have high information gain. As we eliminate less useful features, we increase the number of clusters to ensure that the final feature set is sufficiently expressive.

Note that the clustering at this stage is solely for feature selection and has no impact on the main clustering phase (\S~\ref{subsec:clustering}).

\subsection{Clustering}
\label{subsec:clustering}
The second stage involves clustering traces using the critical features identified in the previous stage. In this stage, we attempt to reduce the complexity of balancing the skew by obtaining their salient clusters. 

To detect skew in the dataset, we can look at the attributes $\Phi$ of input traces. However, balancing the skew based solely on these attributes proves to be a complex task. This difficulty arises because the attributes, represented as $[\phi_1, \phi_2, \ldots, \phi_n]$, are continuous random variables that may not be independent. In other words, modifying the skew of one attribute could negatively affect the skew of another. To address this issue, we cluster the traces to obtain a single distribution to balance. We achieve this using a clustering algorithm $C$ to obtain the labels $c$ so that the mapping again becomes $c = C(\Phi)$. By doing so, we create a ranking function that allows us to instead prioritize a categorical distribution of input traces, where the cluster labels act as the categories. We represent this distribution as $y$, where $y_i$ is a category, or salient trace cluster within it.

To cluster the traces, we employ Gaussian Mixture Models (GMM) with Kmeans++ initialization~\cite{scikit-learn}. Gaussian Mixture Models use a generalized Expectation Maximization algorithm~\cite{Expectat23:online} and can effectively deal with the large variations found in input data. Thus, GMMs are a good fit for our real-world input trace datasets. However, GMMs can often converge to local optima and require us to know the number of clusters a priori. Hence, to produce an effective clustering automatically, we perform a two-stage search for random initializations used in GMMs and the number of clusters. First, for different cluster counts, we evaluate the GMM's log-likelihood score for the trace features across a range of random initializations and identify the initialization that maximizes the log-likelihood score for each cluster count. Second, we determine the optimal number of clusters from the output of the previous stage based on the highest normalized Silhouette score~\cite{Silhouet31:online}.

\subsection{Prioritization}
\label{subsec:prioritization}
With critical feature identification and clustering stages complete, we have a categorical distribution of input traces $y$ that we can balance by prioritization.

So far, we have discussed balancing the distribution $y$. While this can be done in a number of ways, to ensure that the balancing leads to meaningful performance improvements, we introduce a target function to balance the distribution around: ``reward-to-go''. Reward-to-go represents the additional rewards that a controller can still achieve. This can be formally defined by Equation~\ref{eq:reward_to_go}:
\begin{equation} 
\label{eq:reward_to_go} 
\Delta G_{y_i} = \E_{y_i}[G^{\pi^*} - G^{\pi^\theta}]
\end{equation}
In this equation, $y_i$ is a category (\S~\ref{subsec:clustering}) in the input trace distribution $y$, $G = \sum_{t=0}^{\infty} \gamma^t r_t$ is the discounted return of the trace as described in Section \ref{sebsec:rlproblem}, $G^{\pi^*}$ is the return under the optimal policy $\pi^*$, and $G^{\pi^\theta}$ is the return under the current policy. We aim to balance the input trace distribution based on how suboptimally the current policy performs, ensuring a uniform gap across all traces. In other words, we seek to ensure that target function $\Delta G_{y_i} = \Delta G_{y_j}$ for all categories $y_i$ and $y_j$. However, calculating reward-to-go is often not possible in real-world situations because it depends on variables such as the controller's training parameters and state features, and can require solving an NP hard problem~\cite{meskovic2015optimal}. In this work, we introduce two strategies to approximate this prioritization: Static and Dynamic.

\parab{\textit{Static Prioritization}}. In this approach, we tackle skew by statically balancing the distribution of input traces. Specifically, we adjust the sampling weights to be the inverse of the distribution $y$, as expressed in Equation~\ref{eq:static}:
\begin{equation}
\label{eq:static}
W_{y_i} = \frac{1}{f(y_i)}
\end{equation}
Here, $W_{y_i}$ signifies the prioritization weight for category $y_i$, and $f(y_i)$ is the original probability density function for the categorical distribution $y$. When we sample according to these prioritization weights, we modify the effective probability density function, which now becomes $f'(y_i) = \frac{W_{y_i} f(y_i)}{\sum_{y_k \in y} W_{y_k} f(y_k)}$.

While there exists no analytical way to compute $\Delta G_{y_i}$, in some cases, we can show that static prioritization effectively balances the skew. First, consider that under random trace sampling, the imbalance can be arbitrarily large:

\begin{proposition}
Let \( L \) be a constant and \( y \) be a categorical distribution of input traces. Suppose 
\[
\frac{\Delta G_{y_i}}{\Delta G_{y_j}} \approx \frac{f(y_j)}{f(y_i)},
\]
then there exists a distribution of traces \( y \) such that 
\[
\frac{\Delta G_{y_i}}{\Delta G_{y_j}} \geq L.
\]
\end{proposition}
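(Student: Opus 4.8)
The plan is to prove the statement by an explicit construction. The hypothesis pins the ratio of reward-to-go between two clusters to the \emph{inverse} of the ratio of their sampling frequencies under random trace selection, so it suffices to exhibit a categorical trace distribution in which one cluster is at least $L$ times rarer than another; the reward-to-go ratio then inherits the same blow-up.

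First I would construct the distribution. Take $y$ to consist of two categories $y_1$ and $y_2$ with $f(y_1) = p$ and $f(y_2) = 1 - p$, choosing $p \in \bigl(0, \tfrac{1}{L+1}\bigr]$ (any $p$ this small works, for every constant $L$; if more categories are wanted, the remaining mass can be split among them without affecting the argument). This is a well-defined categorical distribution. Substituting into the hypothesis with $i = 1$, $j = 2$ gives
\[
\frac{\Delta G_{y_1}}{\Delta G_{y_2}} \;\approx\; \frac{f(y_2)}{f(y_1)} \;=\; \frac{1-p}{p} \;=\; \frac{1}{p} - 1 \;\ge\; (L+1) - 1 \;=\; L ,
\]
so the pair $y_i = y_1$, $y_j = y_2$ witnesses the claim.

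The only point requiring care --- and the closest thing to an obstacle --- is how literally to read the ``$\approx$''. Taken as the modelling assumption the proposition states (reward-to-go inversely proportional to cluster frequency), the displayed inequality is immediate. If instead ``$\approx$'' means equality up to a fixed multiplicative constant $c > 0$, the same construction works after shrinking $p$ to, say, $p \le \tfrac{c}{L+c}$. Either way no sharper estimate is needed: the purpose of the proposition is conceptual, certifying that random trace sampling can leave the reward-to-go gap unbounded across clusters once the trace distribution is sufficiently skewed, which is exactly what motivates the static reweighting in Equation~\ref{eq:static}.
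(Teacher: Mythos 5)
Your proof is correct and follows essentially the same route as the paper's: an explicit two-category distribution in which the rare cluster has mass at most $\tfrac{1}{L+1}$, so the assumed inverse-frequency relation forces the reward-to-go ratio up to at least $L$. Your version is marginally more careful (you allow any $p \le \tfrac{1}{L+1}$, discuss the reading of ``$\approx$'', and keep the indices consistent with the stated hypothesis, where the paper's displayed ratio $\Delta G_{y_2}/\Delta G_{y_1}$ appears to have $i$ and $j$ swapped), but the construction and the idea are identical.
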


\begin{proof}
Consider a distribution with two categories where
\[
f(y_1) = \frac{1}{1+L} \quad \text{and} \quad f(y_2) = 1 - f(y_1) = \frac{L}{L+1}.
\]
From the above, it follows that
\[
\frac{\Delta G_{y_2}}{\Delta G_{y_1}} \approx L.
\]
\end{proof}

\noindent However, using static prioritization, this imbalance no longer exists:
\begin{proposition}
Let $y'$ denote the re-weighted categorical distribution of input traces. Suppose
\[
\frac{\Delta G'_{y_i}}{\Delta G'_{y_j}} \approx \frac{f'(y_j)}{f'(y_i)},
\]
then 
\[
\Delta G'_{y_i} \approx \Delta G'_{y_j}.
\]
\end{proposition}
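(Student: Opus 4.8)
The plan is to observe that static prioritization, by construction, flattens the re-weighted distribution $y'$ to the uniform one, after which the claimed approximate equality of the reward-to-go values follows immediately from the stated proportionality hypothesis.

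First I would recall that static prioritization sets $W_{y_i} = 1/f(y_i)$ (Equation~\ref{eq:static}), which is well-defined since every category $y_i$ that actually occurs has $f(y_i) > 0$. Substituting this choice of weights into the effective density $f'(y_i) = \frac{W_{y_i} f(y_i)}{\sum_{y_k \in y} W_{y_k} f(y_k)}$, each product $W_{y_i} f(y_i)$ collapses to $1$, so $f'(y_i) = 1/\lvert y \rvert$ for every category, where $\lvert y \rvert$ is the (finite) number of categories. In other words, the re-weighted distribution $y'$ is exactly uniform.

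Next I would use the hypothesis $\frac{\Delta G'_{y_i}}{\Delta G'_{y_j}} \approx \frac{f'(y_j)}{f'(y_i)}$ together with the fact that $f'(y_i) = f'(y_j)$ for all $i, j$: the right-hand ratio is then $1$, so $\frac{\Delta G'_{y_i}}{\Delta G'_{y_j}} \approx 1$, i.e. $\Delta G'_{y_i} \approx \Delta G'_{y_j}$, which is the desired conclusion. It is worth contrasting this with the previous proposition, where the unbalanced ratio $f(y_j)/f(y_i)$ could be driven up to any constant $L$, whereas here the balanced ratio is pinned at $1$ regardless of the original skew.

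I do not expect a genuine obstacle: the argument is essentially a one-line substitution once the uniformity of $f'$ is noted. The only points I would spend a sentence on are (i) that the category set is finite, so the normalizing sum is well-defined, and (ii) that the proportionality between reward-to-go ratios and density ratios is an assumed modeling approximation (as flagged in the surrounding text) rather than something derived here — the proposition is conditional on it, so the proof should not attempt to justify it.
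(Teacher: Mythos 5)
Your proposal is correct and follows essentially the same route as the paper: both substitute the static weights $W_{y_i}=1/f(y_i)$ into the definition of $f'$, observe that each product $W_{y_i}f(y_i)$ equals $1$ so the density ratio collapses to $1$, and conclude from the hypothesis. Your version merely makes explicit the intermediate observation that $f'$ is uniform, which the paper leaves implicit.
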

\begin{proof}
From the given condition, we have
\[
\frac{\Delta G'_{y_i}}{\Delta G'_{y_j}} \approx \frac{f'(y_j)}{f'(y_i)} = \frac{W_{y_j} f(y_j)}{W_{y_i} f(y_i)} \cdot \frac{\sum_{y_k \in y} W_{y_k} f(y_k)}{\sum_{y_k \in y} W_{y_k} f(y_k)} = 1.
\]
\end{proof}


Given these propositions, it is evident that under static prioritization, irrespective of the initial input trace distribution, the relative reward-to-go ratio $\frac{\Delta G'{y_i}}{\Delta G'{y_j}}$ is close to one, while this ratio can take arbitrarily large values under random trace sampling.
For both propositions, an underlying assumption is that the ratio $\frac{\Delta G_{y_i}}{\Delta G_{y_j}}$ is approximately equal to the inverse of the ratio of probability densities for the relevant categories. 
This assumption mirrors a real scenario: as the sampling frequency of a category increases, the controller also becomes better at handling traces from that category. Consequently, the reward-to-go gap decreases with increasing sampling probability for the category.

\parab{\textit{Dynamic Prioritization}}. 
In dynamic prioritization, we compute an approximation of reward-to-go that adapts as the training progresses. Reward-to-go of a category can vary as the training progresses, and hence, the extent of prioritization of a category to achieve high performance can differ across categories during training.

\begin{equation}
\label{eq:pts_dynamic}
\begin{aligned}
\Delta G_{y_i} &= \E_{y_i}[G^{\pi^*} - G^{\pi^\theta}]  \\
\Delta G_{y_i} &\approx \E_{y_i}[\hat{G}(\Phi) - G^{\pi^\theta}]  & \text{// Approximate unknown policy} \\
\Delta G_{y_i} &\approx \E_{y_i}[\hat{G}(\Phi) - G^{\pi^{\theta}}] -\E_{y_i}[G^{{\pi}^{\theta}}]&  \text{// Compensate bias in }  \hat{G}(\Phi)
\end{aligned}
\end{equation}



As the optimal return cannot be calculated, we replace the return $G^{\pi^*}$ with 
the expected return of the trace computed based on the corpus of seen traces, $\hat{G}(\Phi)$. Note that $\hat{G}(\Phi)$ is a function approximator that is trained continuously, in parallel with the controller. In a broad sense, this allows us to measure the improvement the controller can still achieve on a given trace. Nonetheless, this approximation is vulnerable to bias, especially in input traces where the controller's performance is poor. In such poor conditions, the estimate may become overly pessimistic and might not accurately capture the reward-to-go. To address this concern, we introduce the second term, $-\E_{y_i}[G^{{\pi}^{\theta}}]$, which gives priority to traces that have low returns. 

The dynamic weights are proportional to the normalized sum of components of $\Delta G_{y_i}$ (Eq.~\ref{eq:pts_dynamic}). Note that our prioritization stage is outside the DRL algorithm's training loop in the Trace Selection module (Fig.~\ref{fig:trace_selection}).



\section{Gelato}
\begin{figure}[t]
	\centering
	\includegraphics[width=0.9\linewidth]{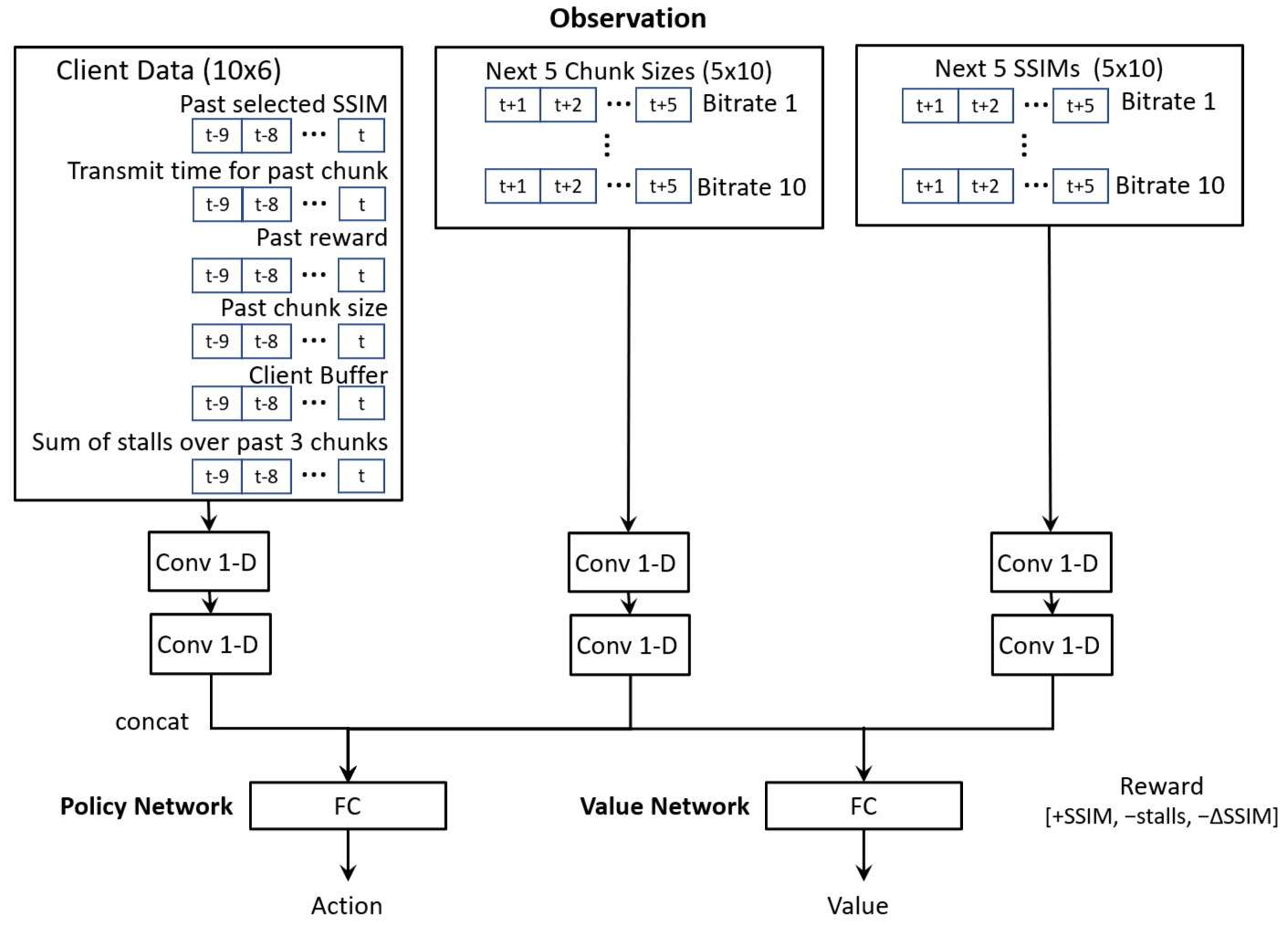}
	\caption{\textbf{Architecture of Gelato}: Gelato takes as input complex features of the video stream.}
	\label{fig:gelato}
\end{figure}

\label{sec:gelato}
We present a new controller architecture for ABR, Gelato. Our design is inspired by Pensieve~\cite{pensieve} but with a number of crucial changes. Unlike other DRL environments, which have simpler controllers, ABR can benefit from a new architecture to improve efficiency and training performance.  We will demonstrate in Section~\ref{sec:experiments} that by combining Gelato and Plume framework we obtain a state-of-the-art controller that surpasses all existing ABR controllers in both real-world and simulated scenarios. For an overview, see Figure~\ref{fig:gelato}.

\parab{Rewards}. We use SSIM as the optimization goal with the reward coefficients used in Fugu~\cite{fugu} ($+$SSIM, $-$stalls, $-\Delta$SSIM). We use the video chunk sizes and SSIM values from the logged data publicly released by the Puffer platform. We also normalize rewards with the transformation $r: sign(r)(\sqrt{|r|+1} - 1) + \epsilon r$ and clipping, where $\epsilon$ is $10^{-2}$. The transformation has been empirically shown to better handle rewards with large scales and varying density~\cite{pohlen2018observe}. It prevents extremely large positive or negative values of reward (e.g for an unusually high SSIM, a long stall, etc) from dramatically affecting the controller.

\parab{Features}. We use rich application-level features, keeping a history of the client buffer and past rewards. We additionally use a history of stalls over a longer $30$ chunk horizon, aggregated using a sum over $3$ chunks. This history allows the controller to get a deep understanding of the client's quality of experience and automatically correct itself when the network conditions become poor. Note that Gelato does not use low-level TCP statistics as Fugu does. However, similar to Fugu, it uses transmit time instead of throughput, and the values of chunk sizes and SSIMs at all encoded bit rates over the next five chunks. These values are often available to ABR controllers because the chunks to be sent are encoded more than $10$ seconds before being sent.  

\parab{Neural Architecture}. We design the deep neural network of Gelato to be efficient. We reduce the total number of parameters by using an additional convolutional layer to downsample the inputs, thereby reducing the input size to the Fully Connected (FC) layer. Gelato's deeper neural network allows for more expressive feature extraction while reducing the number of trainable parameters and Mult-Add operations by $76\%$ and $68\%$ respectively compared to Pensieve. 

For the off-policy DQN variant of Gelato (used for comparison with PER in Figure~\ref{fig:apex_final}), we use the same architecture, swapping the policy and value networks with a single dueling Q-network\cite{wang2016dueling}. For details, see Appendix~\ref{sec:appendix_abr}.

\section{Experiments}
\label{sec:experiments}
In this section, we present the findings of testing the impact of Plume across multiple agent architectures and networking environments, and across simulation and real-world trials.

\subsection{Implementation}
We now turn to detail our implementation of all the experiments performed in this paper. We implement Plume as a Python library compatible with all major DRL frameworks.

\parab{Training environments and algorithms}. We implement the standard ABR environment by extending the Park Project code \cite{mao2019park} and interfacing with Puffer traces~\cite{fugu}. We implement the CC environment by extending the source code provided by Aurora~\cite{aurora}. We implement the Load Balancing environment using the open source Park Project code~\cite{mao2019park}. We use the standard OpenAI Gym~\cite{gym} interface and the RL libraries Stable-Baselines 3~\cite{stable-baselines3} and RLlib~\cite{liang2018rllib}.

\parab{Plume}. We implement Plume completely outside of the DRL workflow in the Trace Selection Module. To implement the critical feature identification stage, we use tsfresh~\cite{tsfresh} for its feature-extraction tools and Scikit-Learn~\cite{scikit-learn} for its decision tree and clustering implementation. To implement the clustering stage, we again use Scikit-Learn for its Gaussian Mixture Model and Silhouette scoring implementation. To implement the prioritization stage, we employ Numpy~\cite{harris2020array} and PyTorch~\cite{paszke2019pytorch}. 

A straightforward implementation of Plume can directly interfere with the various distributed training paradigms used in many DRL algorithms~\cite{mnih2016asynchronous, horgan2018distributed}. To this degree, we implement our prioritization strategy using the distributed shared object-store paradigm in Ray~\cite{moritz2018ray}. This allows us to share the sampling weights across distributed RL processes without interfering with any DRL workflows. 

With our implementation, the overhead for Plume is minimal. The Critical Feature Identification and clustering stages are completed once before training, with runtimes in the order of minutes. In Plume-Dynamic, we train a neural network to map the attributes $\Phi$ of an input trace to the return $G^{\pi^\theta}$ in that trace parallel to the training. We maintain a short bounded history of the trace-return pairs for each category and use this history to compute the two components of our prioritization function. To compute the first term in our approximation, we take the ground-truth samples of trace feature-return pairs, measure the mean absolute error of the neural network for these samples, and average them across each category. To calculate the compensation term, we take the negative of the mean return found in each category. We do this prioritization process continuously, adjusting the weights to the controller's current needs. This dynamic prioritization calculation adds a computational overhead on the order of milliseconds per iteration. This added prioritization computation is handled in parallel to the DRL training and does not slow it down.  

\subsection{Settings}
\label{subsec:real_world_settings}
In this section, we present the settings used in our experiments. We present our results as averages over $4$ instances ($4$ controllers trained using the same scheme with different initial random seeds). This is consistent with the standard reporting practice in the RL community~\cite{horgan2018distributed, r2d2, mnih2016asynchronous}. For testing on the Puffer platform, we select the best of these four seeds for benchmarking. For more details on these settings, see Appendix~\ref{sec:appendix_abr} and \ref{sec:appendix_cc}.

\parab{Adaptive Bitrate Streaming}. For ABR, we use the network traces logged by the Puffer platform over the two-month period of April 2021 - May 2021. The traces are system logs of the video streams. Each trace is a time series of tuples over all chunks sent during the session that includes (i) the chunk sizes and SSIMs at various bitrates, (ii) the bitrate chosen by the ABR algorithm for that chunk, and (iii) the time taken to transmit that chunk. We calculate the effective throughput over time using this data and use it alongside the chunk sizes and SSIMs for simulation. We enforce a minimum trace length requirement of $3$ stream-minutes to reduce I/O overhead. Moreover, during training, we randomly split long traces into lengths of 500 chunks in order to prevent them from dominating training. This results in more than $75,000$ traces, of which we randomly select about $55,000$, representing over 4.25 stream-years, for our analysis. Of these, we use $40,000$ for training and about $15,000$ for testing. We evaluate every controller using the same train and test set. 

\parab{ABR Puffer Platform}. We test Gelato with both random sampling and Plume on the live streaming research platform Puffer from 01 October 2022 - 01 October 2023. The Puffer platform streams live TV channels such as ABC, NBC or CBS over the wide-area Internet to more than $200,000$ users~\cite{puffer-online}. Over this time, we analyzed the ABR algorithms streamed over $58.9$ stream-years of video. 
We report the performance as SSIM vs. stall ratio, following the convention used by the Puffer platform~\cite{fugu}. 

We compare Gelato-Random and Gelato-Plume-Static with the performance of the Buffer-based controller BBA~\cite{bba}, the in-situ continuous training controller Fugu's February version, Fugu-Feb~\cite{fugu}, and CausalSim~\cite{alomar2023causalsim}, a version of Bola~\cite{bola} tuned by trace-driven causal simulation. We note that on the Puffer platform, Gelato-Random is called by its code name ``unagi'', Gelato-Plume-Static is called ``maguro''. We additionally compare Gelato with the original version of Fugu over the period 07 March 2022 - 05 October 2022 in Figure~\ref{fig:puffer_website_plot} in Appendix~\ref{sec:appendix_abr}.


\parab{Congestion Control}. For congestion control, we use the synthetic network traces employed by the DRL CC algorithm Aurora~\cite{aurora}. Here, each trace is represented by $4$ key simulation parameters: throughput, latency, maximum queue size, and loss. For training, we sample throughput from the range $[100, 500]$ packets per second, latency from the range $[50, 300]$ milliseconds, maximum queue size from the range $[2, 50]$ packets, and loss rate from $[0, 2]$ percent. For testing, we broaden the ranges and sample throughput from $[50, 1000]$, latency from $[25, 500]$, maximum queue size from $[2, 75]$, and loss from $[0, 3]$. We sample throughput, maximum queue size, and loss rate spaced evenly in the range on a geometric progression, while sampling latency uniformly evenly. We note that we do this sampling only once and fix it for both training and testing for all controllers.

\parab{Load Balancing}. In load balancing, we use synthetic job traces from the Park Project~\cite{mao2019park}. Each trace represents a time series indicating the size of arriving jobs over time. Following standard parameters, the inter-arrival times are sampled from the exponential distribution $exp(\lambda = 55)$, and the job sizes are sampled from the pareto distribution $pareto(x_m = 1.5, \alpha = 100)$. We limit the trace length to $650$ to ensure that the controller is sufficiently penalized for poor scheduling decisions, and that the variance of returns $G$ remains finite. As in congestion control, we perform this sampling once and fix it for both training and testing.

\subsection{Results}
\begin{figure*}[t]
\centering
\begin{subfigure}{.85\textwidth}
    \centering
    \includegraphics[width=\textwidth]{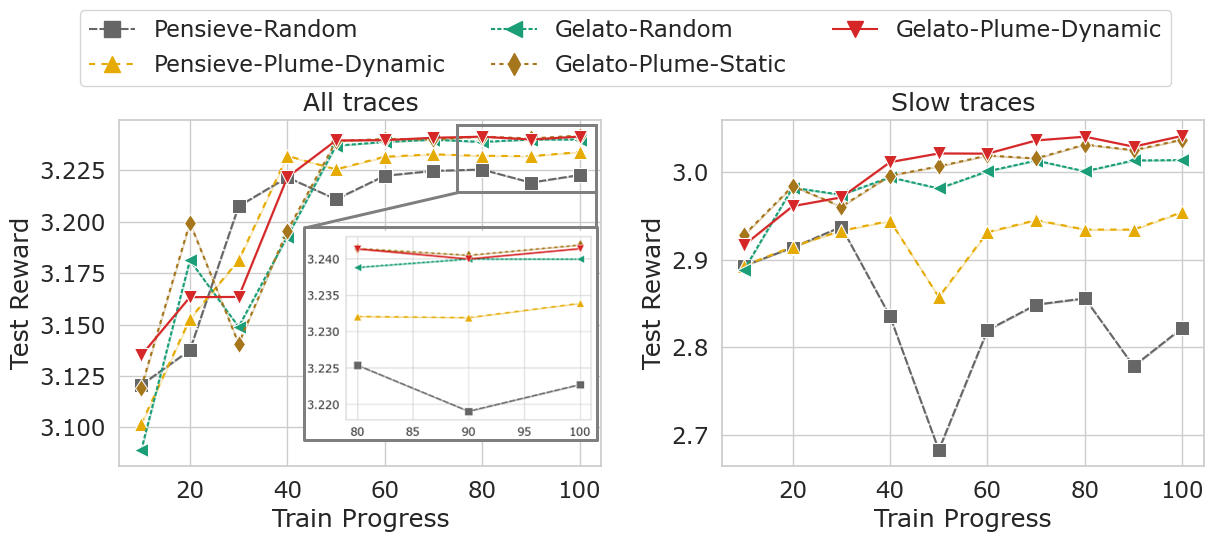}
    \caption{(a) Adaptive Bitrate Streaming}
    \label{fig:abr_sim_all}
\end{subfigure}
\begin{subfigure}{.399\textwidth}
    \centering
    \includegraphics[width=\textwidth]{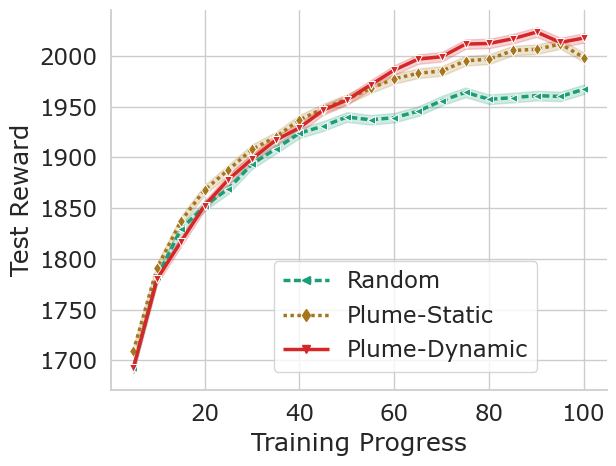}
    \caption{(b) Congestion Control}
    \label{fig:cc_sim}
\end{subfigure}
\begin{subfigure}{.4\textwidth}
    \centering
    \includegraphics[width=\textwidth]{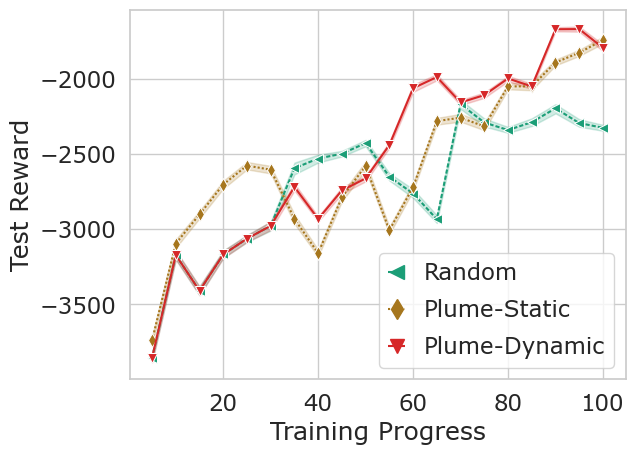}
    \caption{ (c) Load Balancing}
    \label{fig:lb_sim}
\end{subfigure}
\begin{subfigure}[t]{.8\textwidth}
    \centering
    \includegraphics[width=\textwidth]{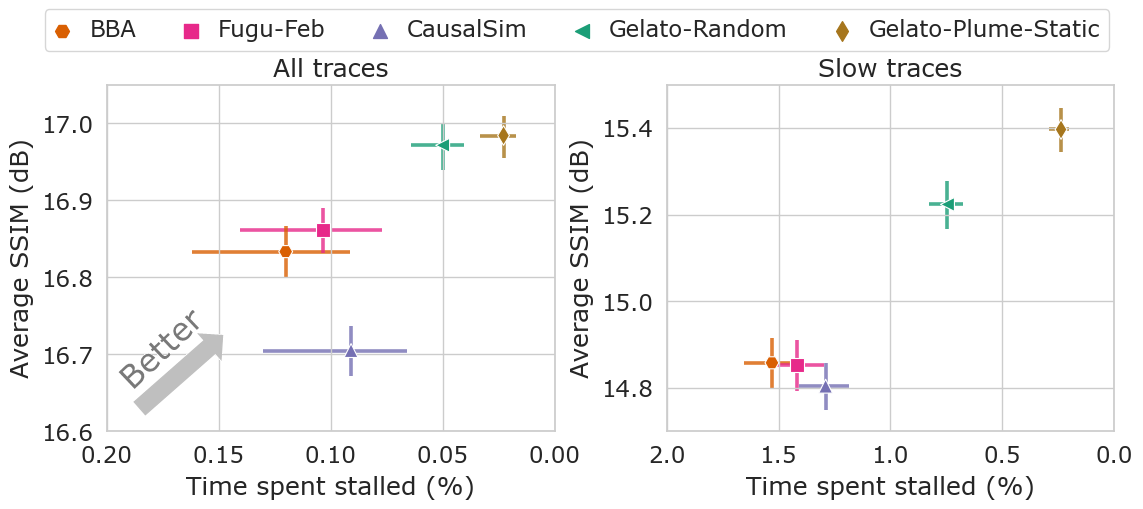}
    \caption{(d) ABR Puffer Platform}
    \label{fig:puffer}
\end{subfigure}

\caption{\textbf{Plume performance over ABR, CC and LB}: Plume outperforms random sampling in simulation and real-world platforms for ABR, CC and LB. The ABR Puffer Platform plots visualize data from the live streaming platform Puffer over the period $01$ Oct '$22$-$01$ Oct '$23$, comprising of over $58.9$ stream-years of video. We re-plot the data from the Puffer website~\cite{puffer-online} to aggregate the different experiment periods together. $95\%$ confidence intervals are shown as error bars and bands. We note that the axis of the plots are different due to inherent differences between the objective functions.}
\label{fig:main_results}
\end{figure*}

In this section, we present the results of our experiments evaluating Plume in ABR, CC and LB. We aim to answer the following questions: How does the performance of Plume compare with random sampling? How do controllers trained with Plume perform in the real world?

In Figure~\ref{fig:main_results}, we present our results comparing Plume with random sampling and other controllers. We present our observations below.

\parab{Plume outperforms random trace sampling across all benchmarks}. With Adaptive Bitrate Streaming, Congestion Control, and Load Balancing, in both simulation and the real world, Plume achieves higher performance than random trace sampling. In Figure~\ref{fig:abr_sim_all}, we analyze the performance of Plume in ABR. We observe that Plume converges to a higher test reward, in both all traces and slow traces. We additionally see that Pensieve-Plume-Dynamic significantly improves upon Pensieve-Random, but that the improvement is not enough to match the performance of Gelato. In Figure~\ref{fig:cc_sim}, we focus on the performance in congestion control. We find a similar trend, with Plume-Dynamic and Plume-Static providing statistically significant improvements in convergence and performance over random input trace sampling. We find the same story in load balancing in Figure~\ref{fig:lb_sim}. We note that while the absolute numerical differences may appear small due inherent scales of the reward function, they exceed the $95\%$ confidence interval bands,  and translate to large real-world differences as we will see next.  

\parab{Plume-Static closely tracks Plume-Dynamic}. 
In Figures~\ref{fig:abr_sim_all}, \ref{fig:cc_sim} and \ref{fig:lb_sim}, we observe that Plume-Static, which employed a simpler prioritization strategy, closely tracks the performance of Plume-Dynamic. This is likely due to the fact that in these scenarios, the impact of shifting reward-to-go values or difficult input traces is minimal. However, as we will see later in Section~\ref{sec:pts_benchmark}, when the training distribution is anomalous or is significantly different from the testing distribution, Plume-Dynamic can prove effective over Plume-Static.  


\parab{Gelato outperforms state-of-the-art controllers in the real world streaming live television over a 1-year period}. To further understand the benefit of Plume, we run Gelato with Plume-Static and random sampling on the real-world live-streaming Puffer platform~\cite{fugu}. We opted for Gelato combined with Plume-Static for this evaluation given its analogous performance to Plume-Dynamic in ABR, but with a simpler design. Additionally, we included Gelato with random sampling as a baseline for comparative analysis. In Figure~\ref{fig:puffer}, we see that Gelato-Plume-Static outperforms the current state-of-the-art controllers Fugu-Feb and CausalSim, alongside the heuristic-based BBA in both SSIM and stalling. Although prior work~\cite{fugu, alomar2023causalsim} reported statistically significant stalling improvements on Puffer, Gelato distinguishes itself by becoming the first ABR controller to achieve statistically significant improvements in both quality and stall reduction. This is particularly noteworthy as Gelato does not depend on low-level TCP metrics like Fugu or intricate simulation techniques that CausalSim uses.

Over this 1-year period, the algorithms we analyze streamed over $58.9$ stream-years of videos to over $200,000$ viewers across the Internet~\cite{puffer-online}. Over this duration, Gelato-Plume-Static achieves $75\%$, $78\%$ and $81\%$ stall reduction compared to CausalSim, Fugu and BBA respectively (Fig.~\ref{fig:puffer}). Gelato-Plume-Static additionally achieves SSIM improvements of $0.28$, $0.12$ and $0.15$ dB over CausalSim, Fugu and BBA respectively. Note that this quality improvement over BBA is more than $5\times$ that of Fugu, which only managed a $0.03$ dB improvement over BBA. CausalSim did not provide an SSIM improvement over BBA over this period. Gelato-Plume-Static has an average SSIM variation of $0.77$ dB, compared to $0.67$, $0.53$ and $0.78$ dB of CausalSim, Fugu and BBA respectively. Moreover, we find that Gelato-Random is a strong baseline, achieving $0.27$ dB SSIM improvement and $45\%$ stall reduction over CausalSim. We make similar observations when comparing Gelato with the original version of Fugu in Figure~\ref{fig:puffer_website_plot} in Appendix~\ref{sec:appendix_abr}.



\section{Plume Benchmarking}
\label{sec:benchmarking}
\label{sec:pts_benchmark}

Having established the performance of Plume on real-world controllers and experiments in Section~\ref{sec:experiments}, in this section, we thoroughly microbenchmark Plume in order to study its impact in isolation. We demonstrate Plume's ability to offer high performance and robustness across various trace distributions. 

\subsection{Settings}
To evaluate Plume and various prioritization strategies, we introduce a controlled ABR environment for microbenchmarking prioritization: TraceBench. TraceBench makes two key changes to the standard ABR environment. First, it simplifies the quality-of-experience measurement to include only two terms, quality and stalling. Second, it parameterizes the traces by key characteristics of real-world traces: mean and variance of network throughput. With these modifications, we can reliably and thoroughly evaluate the controller across various network conditions. 
While this design is a simplification of the real-world environment, it gives a good approximation of a wide range of realistic settings. We believe that the development of such a framework is important for the community as real-world datasets do not allow microbenchmarking of DRL controllers or prioritization strategies. We envision that it can be used by other controller designers. We note that parameterized trace generation is a part of TraceBench, used to create a variety of scenarios to evaluate controllers on. It is not part of any of the prioritization strategies. 

In generating Traces for TraceBench, we focus on traces with two levels of mean throughput, slow and fast, and two levels of variance of the throughput, high variance and low variance. In this benchmark, we generate three sets of datasets with different proportions of these traces: Majority Fast, Balanced, and Majority Slow. See Figure~\ref{fig:toy_abr_dataset} for a visualization of the trace distributions, and see Figure~\ref{fig:toy_abr_trace_viz} in Appendix~\ref{sec:appendix_tiny_abr} for a visualization of example traces. Note that the training and testing datasets remain disjoint: a controller trained on the Majority Fast dataset does not have access to the testing version of it.

We use the off-policy RL algorithm Ape-X DQN~\cite{horgan2018distributed}. To evaluate prioritization in isolation, we use the same DRL hyper-parameters for all agents and present results averaged over $4$ instances. For details of our training parameters, see Appendix~\ref{sec:appendix_tiny_abr}.

\begin{figure*}[t]
	\centering
	\includegraphics[width=0.5\linewidth]{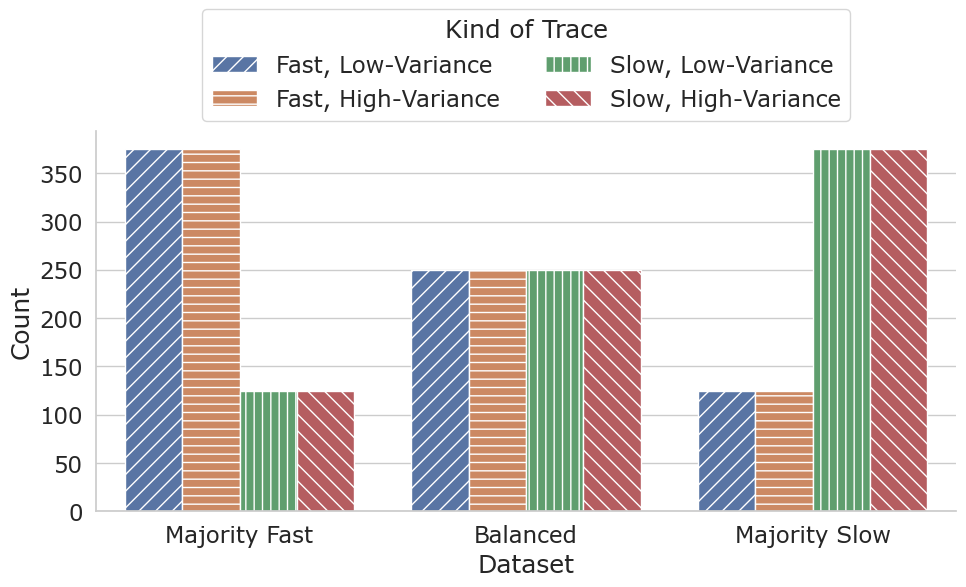}
	\caption{\textbf{Trace Datasets of TraceBench}: Distributions of traces present in each dataset employed in TraceBench. The broad range of trace distributions allows us to thoroughly benchmark prioritization techniques.}
    \labeltext{\arabic{figure}}{fig:toy_abr_dataset}
\end{figure*}

\subsection{Results}
\begin{figure}[t]

	\centering
	\includegraphics[width=0.9\linewidth]{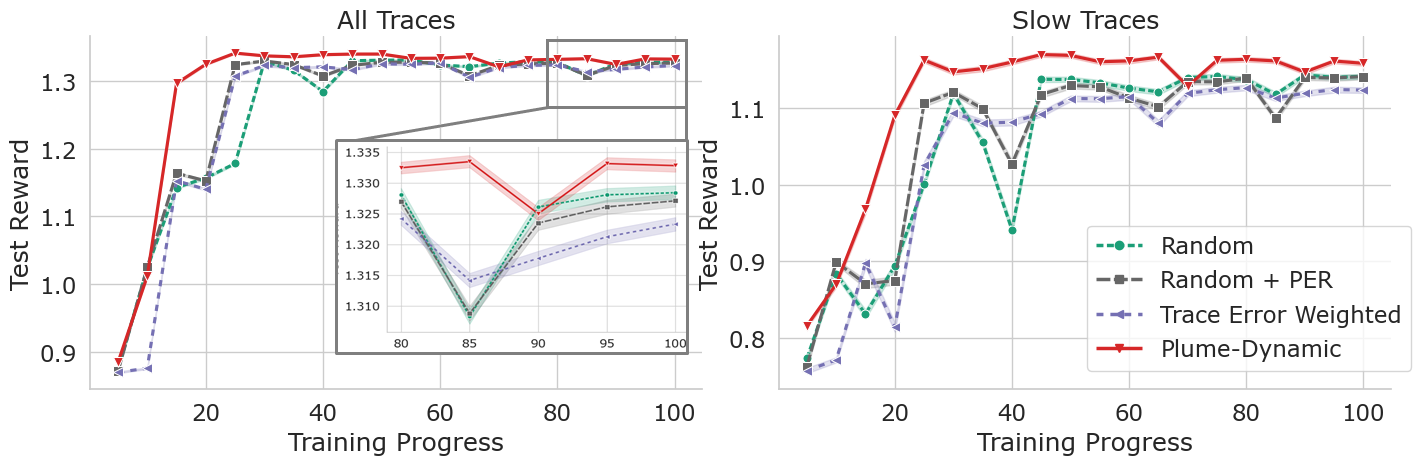}
	\caption{\textbf{ Comparing Prioritization Techniques}: Performance of random sampling, Prioritized Experience Replay (PER), Trace-Error Weighted sampling and Plume-Dynamic on the Majority Fast dataset of TraceBench. Plume-Dynamic, which balances both Trace-Error and Low-reward weights, offers the highest performance. $95\%$ confidence interval shown as error bands.}
    \labeltext{\arabic{figure}}{fig:toy_abr_error_weights}
 \vspace{0.425cm}
\end{figure}

\begin{figure*}[t]
    \centering
\begin{subfigure}[t]{.9\textwidth}
    \centering
    \includegraphics[width=\textwidth]{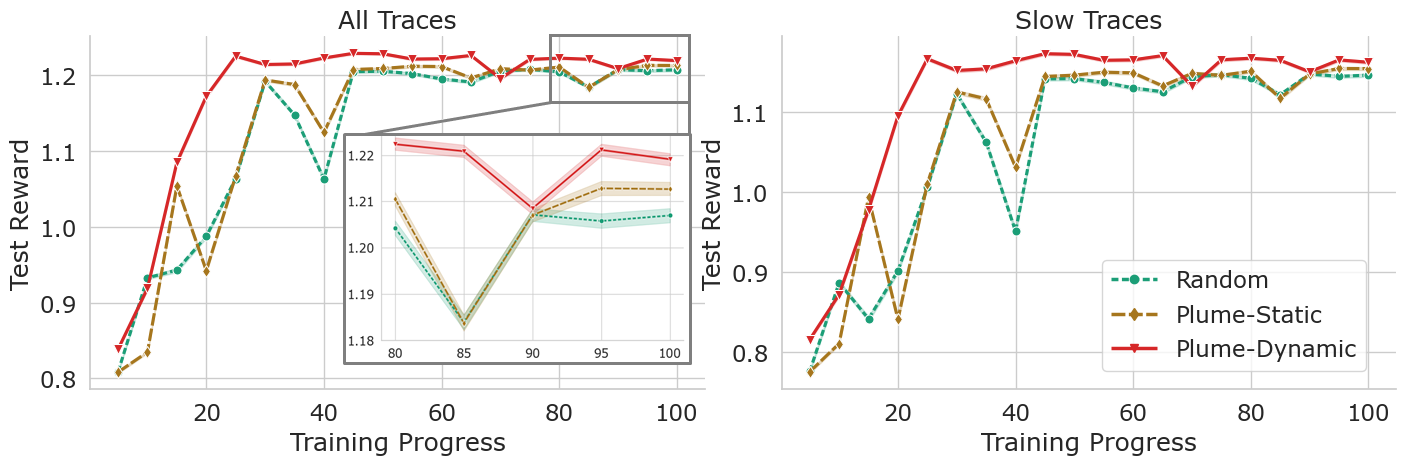}
    \caption{(a) Scenario 1: Training on the Majority Fast, Testing on the Majority Slow dataset.}
    \label{fig:toy_abr_fast_to_slow}
\end{subfigure}

\begin{subfigure}[t]{.9\textwidth}
    \centering
    \includegraphics[width=\textwidth]{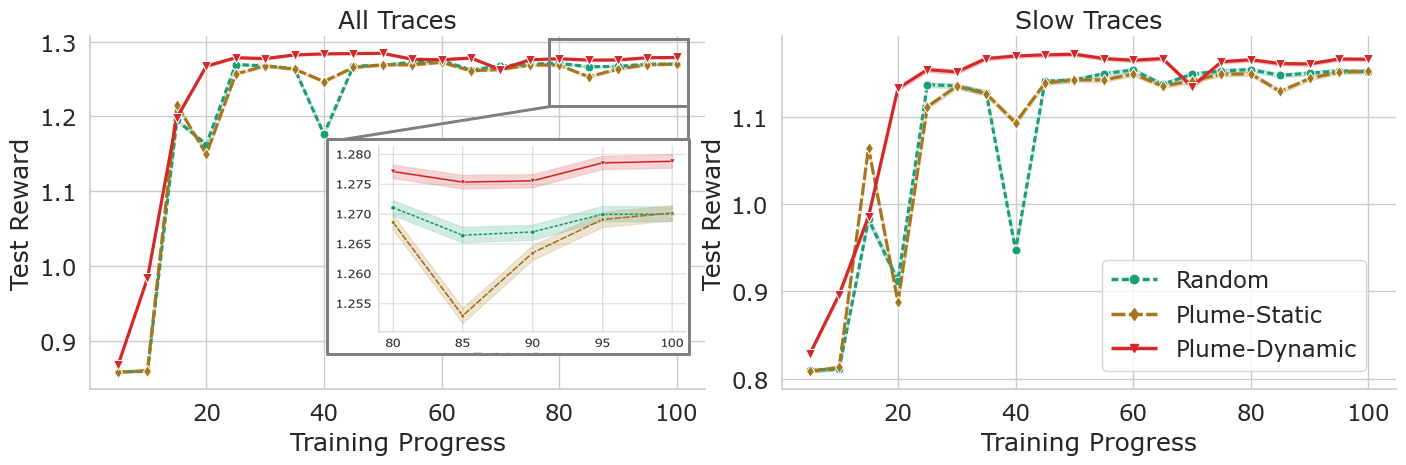}
    \caption{(b) Scenario 2: Training and Testing on the Balanced dataset.}
    \label{fig:toy_abr_balanced}
\end{subfigure}

\begin{subfigure}[t]{.9\textwidth}
    \includegraphics[width=\textwidth]{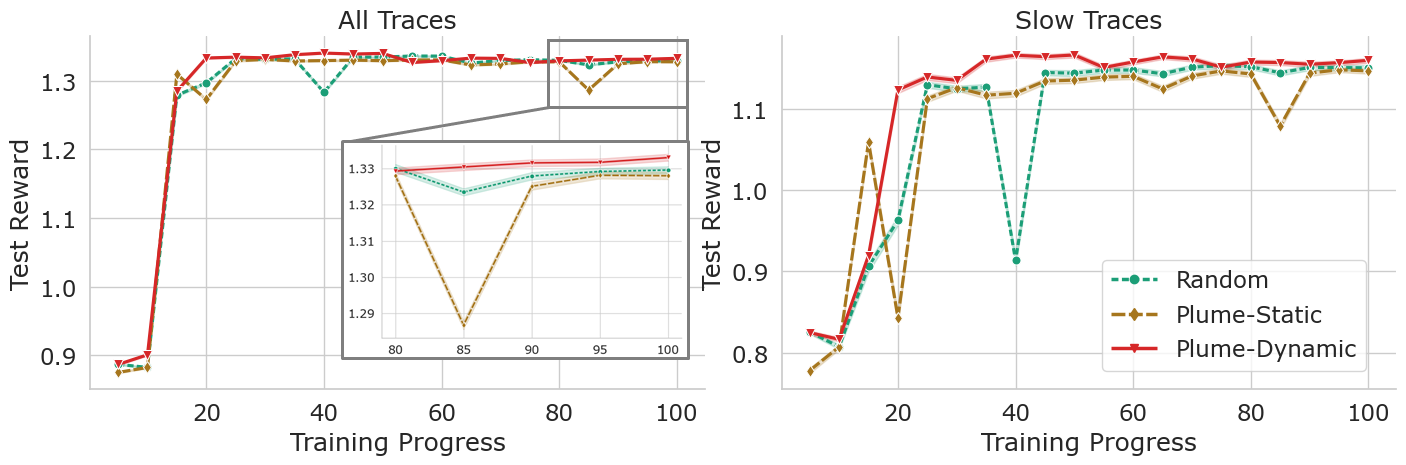}
    \caption{(c) Scenario 3: Training on the Majority Slow, Testing on the Majority Fast dataset.}
    \labeltext{\arabic{figure}}{fig:toy_abr_slow_to_fast}
    \label{fig:toy_abr_slow_to_fast}
\end{subfigure}

\caption{ \textbf{Benchmarking Plume across Trace Distributions}: We benchmark prioritization techniques across different training and testing trace distributions. Plume-Dynamic provides generalizable performance improvement, beating random sampling and Plume-Static in scenarios (1), (2) and (3). $95\%$ confidence interval shown as error bands.}
\label{fig:toy_abr_results}
\end{figure*}

\begin{figure}[t]
	\centering
	\includegraphics[width=0.85\linewidth]{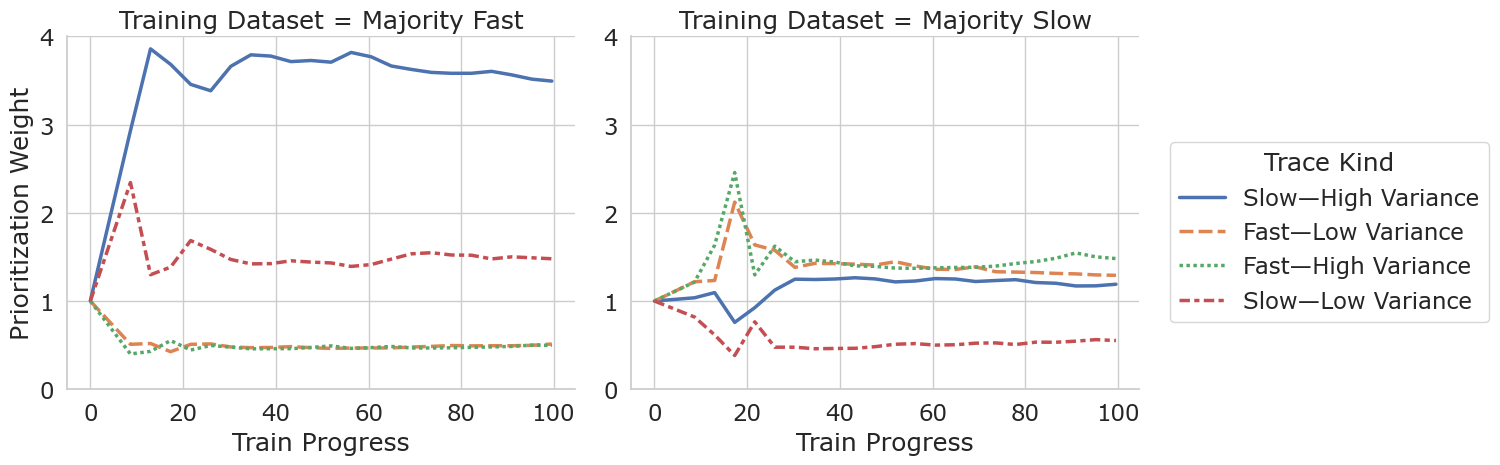}
	\caption{\textbf{Visualization of the prioritization found by Plume-Dynamic in various datasets}: The relative change in sampling weight for each kind of traces over the training progress. Selecting all kinds of traces at weight $1$ is equivalent to random sampling. We note that the ground-truth labels (e.g Slow-High Variance) are not provided to Plume. }
	\label{fig:toy_abr_pts}
\end{figure}

Our experiments investigate two important questions.
First, we evaluate how the versions of Plume's prioritization, Plume-Static and Plume-Dynamic, compare to random trace sampling, the standard trace sampling technique. We additionally evaluate the impact of PER~\cite{schaul2015prioritized}, the state prioritization technique described in Section~\ref{subsec:PER}. Second, we investigate how sensitive these methods are to network conditions distribution shifts.
We would like to emphasize here that these experiments are possible in real-world settings.

\parab{Focusing on tail-ended performance is important}. We start our evaluation with an ablation study on the impact of the approximation terms in Plume-Dyanmic, as presented in Sec.~\ref{subsec:prioritization}. We evaluate a version of Plume-Dynamic without the compensation term. We refer to this version as ``Trace-Error''. We use Majority Fast dataset for this evaluation as it models real-world workloads closely. In Figure~\ref{fig:toy_abr_error_weights}, we present the results of random trace sampling, PER, Plume-Dynamic, and Trace-error on the Majority Fast dataset. We observe that Trace-error can be worse than random sampling, particularly in slow traces, where it not only converges to a low reward but has high variance over the training interval. Meanwhile, Plume-Dynamic, which balances both Trace-error and Low-reward, offers significantly better performance in both Fast and Slow traces. This result highlights the fact that focusing on the low performing traces can be vital to generalizable performance.

In Figure~\ref{fig:toy_abr_results}, we analyze the performance of Plume across various training and testing trace distributions. Particularly, we analyze the following scenarios: 
\begin{itemize}[leftmargin=*,nolistsep]
    \item Scenario 1: The training distribution is similar to the real world but the testing is adversarially different, i.e., we train on the Majority Fast but test on the Majority Slow dataset.
    \item Scenario 2: Both training and testing have a balanced set of traces, i.e., we train and test on the Balanced dataset.
    \item Scenario 3: The training distribution largely consists of the tail end of the testing distribution, i.e., we train on the Majority Slow but test on the Majority Fast dataset.
\end{itemize}

\parab{Plume outperforms random sampling regardless of trace distribution}. As we observe in Figures \ref{fig:toy_abr_fast_to_slow} and \ref{fig:toy_abr_balanced} for the test reward for scenarios (1) and (2), Plume-Dynamic provides a significant performance improvement over random sampling. Moreover, even in Figure~\ref{fig:toy_abr_slow_to_fast} for scenario (3), where we may least expect prioritization to help, Plume-Dynamic is still better than random sampling. We additionally observe that Plume-Static, which performs well in scenario (1), falls behind Plume-Dynamic in scenarios (2) and (3) where the training input trace distributions are either anomalous or are dramatically different from the testing distribution. 

\parab{Plume-Dynamic effectively adapts to all training trace distribution}. To better understand how Plume-Dynamic so effectively generalizes across all of these trace distributions, we visualize the sampling weight of different traces during training in Figure~\ref{fig:toy_abr_pts}. We observe that while training on the Majority Fast dataset, it undersamples the Fast traces and oversamples the Slow ones. In the Majority Slow dataset, it undersamples the Slow--Low Variance traces while oversampling the Fast and Slow--High Variance ones. This highlights the power of Plume-Dynamic's automated prioritization: It adapts itself to the distribution in each dataset and allows the controller to focus on clusters with the most to learn from.

\parab{Controllers trained with Plume are robust to trace distribution shifts}. In the second row of plots in Figures~\ref{fig:toy_abr_fast_to_slow}, \ref{fig:toy_abr_balanced} and \ref{fig:toy_abr_slow_to_fast}, we visualize the Slow-Traces performance of different prioritization schemes. We observe that random trace sampling's performance in slow traces is largely dependent on its training dataset. If the training dataset had few Slow traces, as in scenario (1), the performance is significantly worse than it is in scenario (3), where it had many. However, Plume-Dynamic's performance is robust to the training trace distribution: the controllers all converge to a similar reward in all three scenarios. In the ever-changing landscape of users, devices, and infrastructure inherent to the network domain, this added robustness can be particularly important. It reduces the need for retraining and ultimately reduces the compute requirements and energy consumption of the entire system.


Below, we summarize the findings of our experiments with ABR, CC and LB presented in Section~\ref{sec:experiments}, and the analysis of our extensive Plume benchmarking presented in this section.
\begin{itemize}[leftmargin=*,nolistsep]
    \item Plume is a generalized solution for DRL training in input-driven environments that automatically balances the trace distribution, and offers significant improvement in performance over random sampling in ABR, CC and LB, in simulation and in real-world testing, over both on-policy and off-policy algorithms.
    \item Plume's prioritization strategies work across trace distributions, providing controllers with greater performance and robustness in all.
    \item Gelato trained with Plume offers the best performance when compared to prior ABR controllers on the real-world Puffer platform. It achieves $75\%$ and $78\%$ reduction in stalls over CausalSim~\cite{alomar2023causalsim} and Fugu~\cite{fugu} respectively. It also achieves a statistically significant SSIM improvement of $0.28$ dB over CausalSim and $0.12$ dB over Fugu.
\end{itemize}
\section{Discussion and Limitations}
\label{sec:defense}
We envision Plume to open a new avenue of research in the context of DRL training. Rather than evolve into another hyperparameter that needs tuning in complex RL settings, the problem of trace sampling lends itself well to principled analysis, and in turn a generalized and broadly applicable solution. However, our work still leaves a gap for future work to build upon.

\parab{The need for systematic study of input-driven DRL training}. Our analysis of Plume highlights the significant impact of skew and the benefits derived from addressing it. This finding provides a strong motivation to explore other overlooked factors that may also influence input-driven DRL training. While the broader ML community has conducted in-depth studies on training parameters~\cite{andrychowicz2020matters}, DRL environments~\cite{clary2019let}, and evaluation metrics~\cite{agarwal2021deep}, there is a lack of such research in the networking domain. Engaging in systematic studies in this front could enable the research community to better understand the potential of existing solutions and pave way for an empirical assessment of the real challenges faced by optimized input-driven DRL solutions.

\parab{Future direction for Plume}. In addition to networking environments, Plume can also be beneficial in other trace-driven DRL settings such as drone control, autonomous driving, etc. Plume, as we presented it, cannot be used directly in such environments with more complex input processes. However, extensions to Plume as presented in this paper may be an interesting future direction.

\parab{Sim2Real Gap}. Plume changes \textit{which} traces get sampled and not \textit{how} they are simulated. Plume does not address the problem related to the gap between the simulation environment and the real-world setting (Sim2Real Gap). Solutions that bring simulation closer to reality while still maintaining training efficiency can be combined with Plume. 



\parab{Large-Scale Training}. It is possible that the benefits of higher state-action exploration and feature learning offered by Plume diminish with a very deep neural network over a large number of training steps and parallel environments. Our experimental evidence suggests that Plume is highly relevant for practical DRL environments and training settings. However, we cannot ascertain the effectiveness of Plume at the scale of state-of-the-art Go agents~\cite{muzero},
which requires training capabilities only available to large companies.
\section{Related Work}

\parab{Prioritization in Supervised learning}. Class imbalance is frequently a challenge in supervised data-driven networking problems, where samples of some classes of network conditions or scenarios occur rarely~\cite{leevy2021mitigating, dong2021multi, liang2019empirical, zhang2021mimicnet}. A popular technique to address this problem is to oversample or undersample certain classes to ensure that the model does not drown out the error in the minority classes~\cite{kaur2019systematic}. Such techniques cannot be used in reinforcement learning, where the learning happens using states, actions and rewards rather than a fixed dataset with labels.

\parab{Prioritization in DRL}. While we present the first systematic methodology of prioritization of \textit{input traces} in DRL, prioritization/importance sampling has been applied at other points in the DRL workflow. PER~\cite{schaul2015prioritized} is used to prioritize transitions in the replay buffer in actor-critic algorithms~\cite{acer}, in the multi-agent setting~\cite{multiagentper}, and in text-based DRL environments~\cite{bucketper} to improve sample efficiency. Horgan et.al~\cite{horgan2018distributed} used PER in conjunction with distributed \textit{acting} to improve feature learning. Schulman et.al~\cite{trpo, ppo} employed importance sampling to reduce variance of on-policy training. However, as shown in our experiment (\S~\ref{subsec:motivation_experiment}), these prior solutions do not address the skew in input-driven environments.


\parab{DRL for Networking and Systems applications}. While Plume focuses on improving training over a given input dataset, Gilad et.al.~\cite{gilad2019robustifying} employed RL to find additional training traces that can help the DRL agent generalize to unseen network conditions. Building on this idea, Xia \ea~\cite{xia2022genet} introduced a systematic Curriculum Learning based approach to generate additional environment configurations. Both of the techniques generate \textit{additional} training material for performance in unseen conditions. However, while both of these techniques have been shown to improve the performance of controllers trained with limited datasets (of a few hundred traces), generative solutions have not been demonstrated to provide competitive performance on real-world platforms such as Puffer to the best of our knowledge. In contrast, Plume-trained Gelato continues to be the best performing controller on Puffer since October 2022. 
We note that Plume's training exclusively uses publicly available datasets and does not require solving any trace generation problems.
Mao et.al.~\cite{mao2018variance} introduced the algorithm-side optimization of using input-dependent baselines to reduce the variance of on-policy algorithms at the policy optimization step. Since Plume works outside the DRL training loop, it can also be used in conjunction with any such algorithm-side optimizations. Doubly Robust estimation~\cite{datadrivennetworking} helps in estimating performance variations during input-driven evaluation but does not address the skew in the dataset directly.


\section{Conclusion}
Practical adoption of DRL-based network controllers is limited because the research community does not fully know how to produce high performant controllers. We uncover that skew in the input datasets of DRL controllers plays a significant role in performance, and  put forward Plume, a systematic methodology for addressing skew in input-driven DRL. We thoroughly study the impact of Plume, and show that Plume provides generalizable performance improvement across multiple trace distributions, DRL environments and algorithms. Our novel DRL-based ABR controller, Gelato, trained with Plume offers state-of-the-art performance on the real-world live streaming platform Puffer over more than a year. Plume opens a new avenue of research for methodical control over DRL training in input-driven networking environments and beyond.

\clearpage
\bibliographystyle{ACM-Reference-Format}
\bibliography{reference}
\clearpage

\clearpage
\appendix

\begin{table*}[!h]
\centering
\begin{tabular}{p{6.5cm}p{6.5cm}}
    \toprule
    \textbf{Potential Trace Features} & \textbf{Parameters for feature} \\
    \midrule
    Mean &  -- \\
    Quantile &  $2.5^{th}$\\
    Quantile &  $5^{th}$\\
    Quantile &  $95^{th}$\\
    Truncated Mean & $5^{th}$ quantile \\
    Truncated Mean & $12.5^{th}$ quantile \\
    Truncated Mean & $25^{th}$ quantile \\
    Absolute Fourier Transform Spectrum & Spectral Centroid \\
    \hline
    Ratio of Values beyond standard dev. & Beyond $1\times$ standard dev. \\
    Ratio of Values beyond standard dev. & Beyond $2.5\times$ standard dev. \\
    Variation Coefficient &  -- \\
    Central Approximation of Second Derivative & Mean Aggregation \\
    Truncated Mean Absolute Change & Truncated beyond $5^{th}$ and $95^{th}$ quantile\\
    Truncated Mean Absolute Change & Truncated beyond $1.25^{th}$ and $98.75^{th}$ quantile\\
    Autocorelation & Lag of 3 \\
    Autocorelation & Lag of 4 \\
    Autocorelation & Lag of 8 \\
    \bottomrule
\end{tabular}
\caption{All of the Trace features extracted using the library tsfresh~\cite{tsfresh}. These features are extracted for each trace dataset and then automatically filtered by our novel feature selection technique.}
\label{tab:pts_critical_param}
\end{table*}

\begin{figure*}[t]
    \centering
\begin{subfigure}{.9\textwidth}
    \centering
    \includegraphics[width=\textwidth]{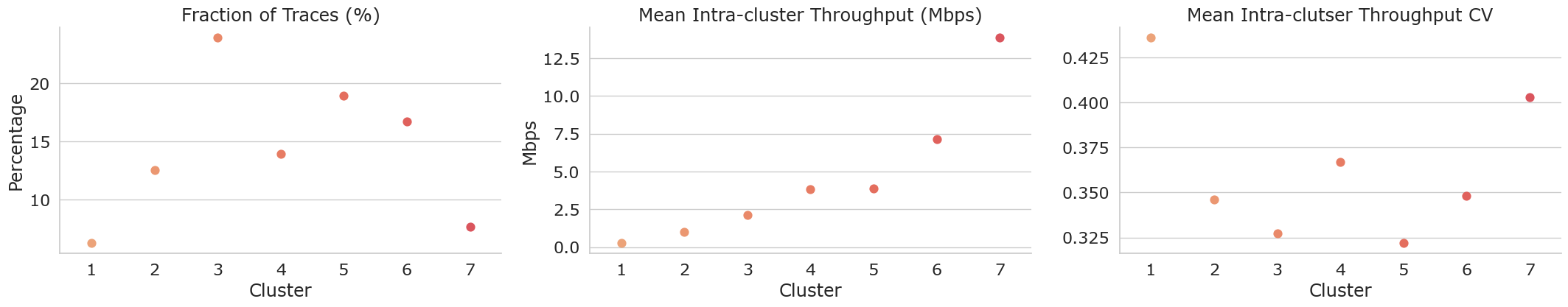}
    \caption{(a) Adaptive Bitrate Streaming Traces}
\end{subfigure}
\begin{subfigure}{.9\textwidth}
    \centering
    \includegraphics[width=\textwidth]{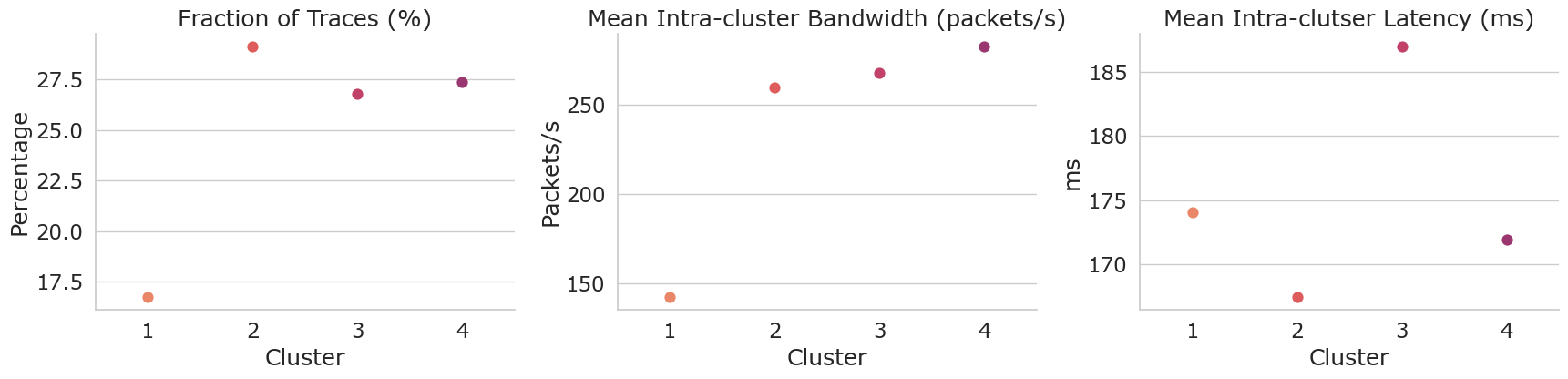}
    \caption{(b) Congestion Control Traces}
    \vspace{4mm}
\end{subfigure}
\begin{subfigure}{.9\textwidth}
    \centering
    \includegraphics[width=\textwidth]{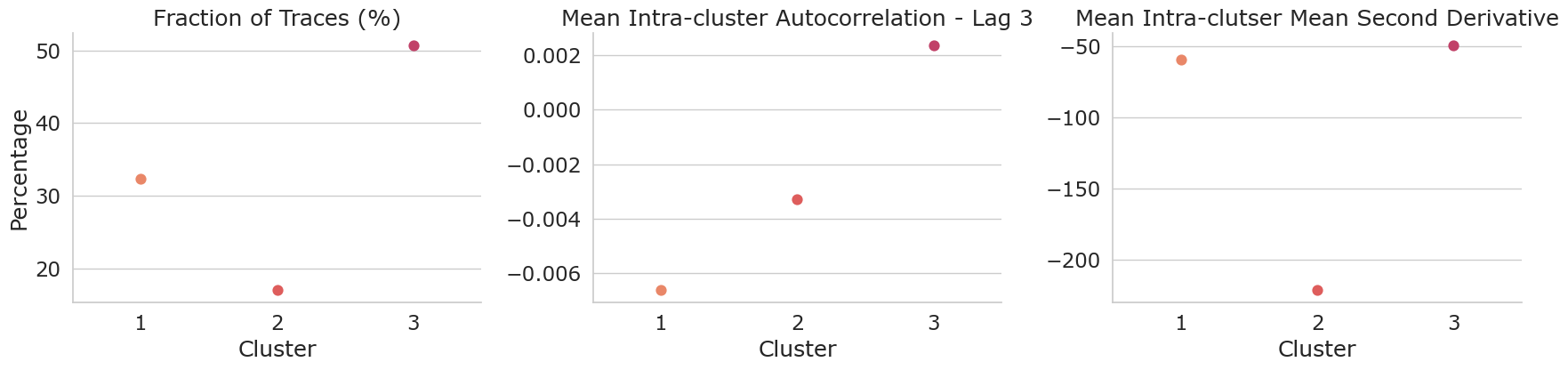}
    \caption{(c)  Load Balancing Traces}
    \vspace{4mm}
\end{subfigure}
\caption{ \textbf{Visualization of Plume Clustering in ABR, CC and LB}: We visualize the clustering automatically produced by Plume in ABR, CC and LB. We see that Plume produces minimal clusters while also separating salient characteristics such as mean throughput and latency. Note that throughput CV is the per-trace coefficient of variation of throughput, and that the axes are different.}
\label{fig:pts_clustering}
\end{figure*}

\begin{figure*}[t]
    \centering
\begin{subfigure}{.75\textwidth}
    \centering
    \includegraphics[width=\textwidth]{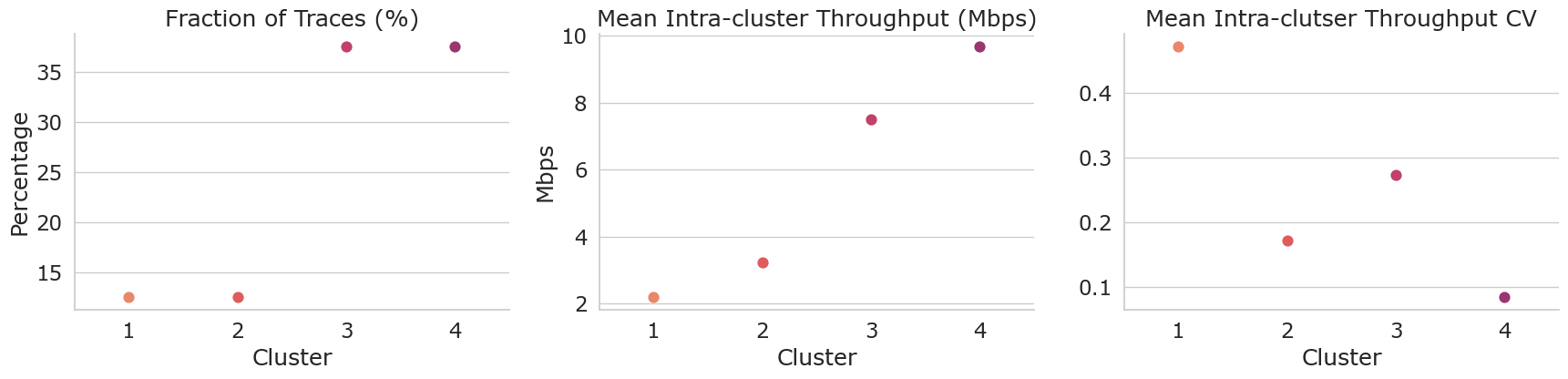}
    \caption{(a)  Majority Fast Dataset}
\end{subfigure}
\begin{subfigure}{.75\textwidth}
    \centering
    \includegraphics[width=\textwidth]{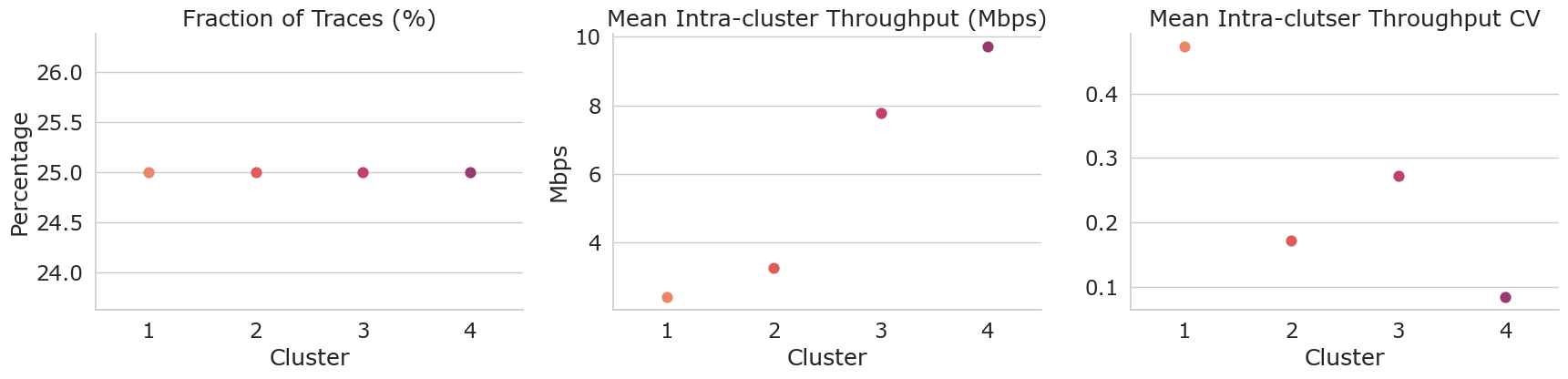}
    \caption{(b)  Balanced Dataset}
\end{subfigure}
\begin{subfigure}{.75\textwidth}
    \centering
    \includegraphics[width=\textwidth]{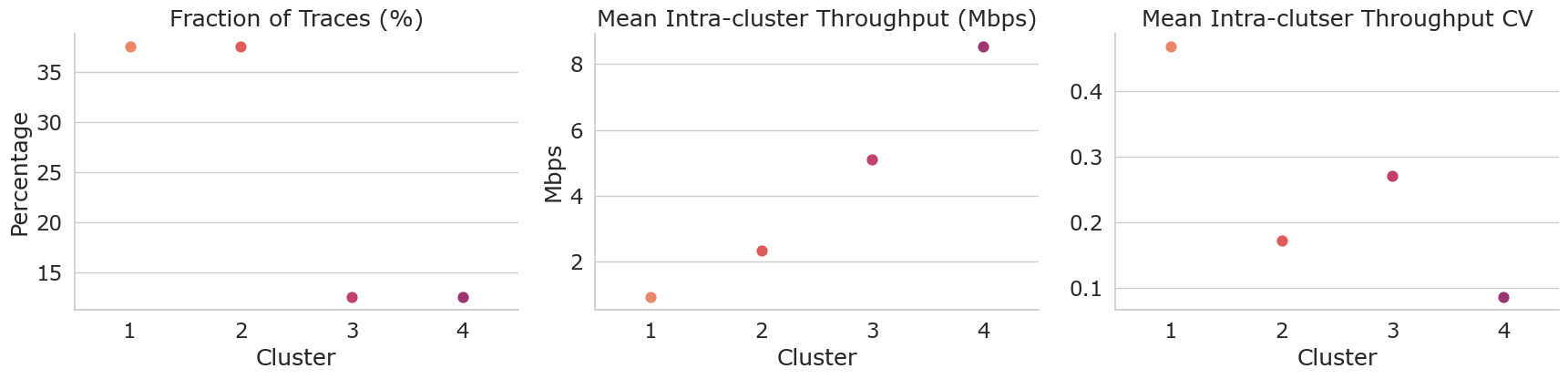}
    \caption{(c)  Majority Slow Dataset}
\end{subfigure}
\caption{ \textbf{Visualization of Plume Clustering in TraceBench}: We visualize the clustering automatically produced by Plume in the Majority Fast, Balanced, and Majority Slow datasets. In each of the datasets, we see that Plume can successfully separate the two levels of throughput and variance. Note that throughput CV is the per-trace coefficient of variation of throughput, and that the axes are different across all plots.}
\label{fig:pts_tiny_abr_clustering}
\end{figure*}

\begin{figure*}[t]
	\centering
	\includegraphics[width=0.4\linewidth]{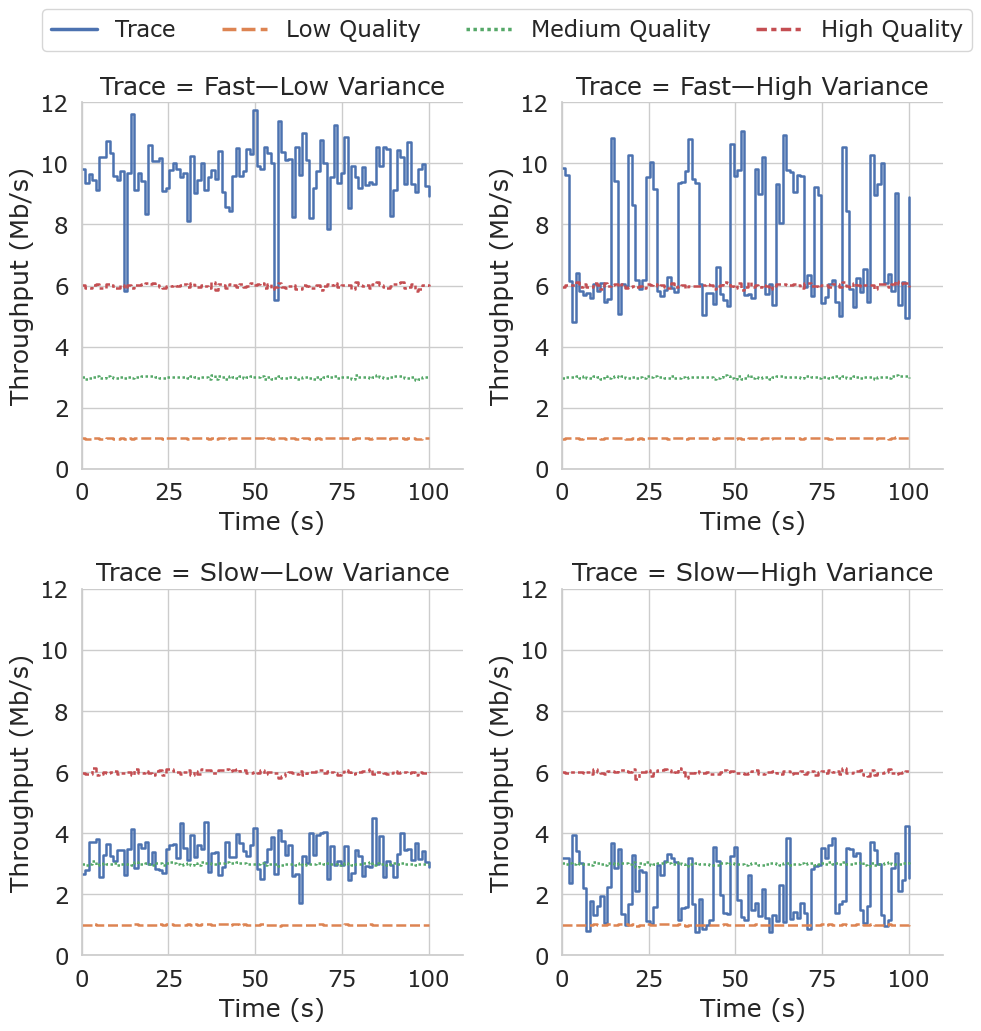}
	\caption{ \textbf{Visualization of Traces generated in TraceBench}: A Throughput vs Time plot of example traces used in TraceBench. The broad coverage of the mean and variance of the throughput requires the agent to learn to adapt to each kind of trace differently.}
	\label{fig:toy_abr_trace_viz}
\end{figure*}

\clearpage

\section{Prioritized Trace Sampling Details}
\label{sec:appendix_pts}
In this section, we provide details, visualizations and analysis of the Plume and its three stages.

\subsection{Critical Feature Identification}
We recall that in the Critical Feature Identification stage, Plume identifies traces by first extracting a wide range of features and then filtering them to find the critical features.

A wide range of features is extracted for each trace in the dataset of traces. Then, this set of features goes through our automated filtering process. During this process, about $40\%$ of the features are eliminated. In Table~\ref{tab:pts_critical_param}, we present the list of all the features extracted. The list contains 16 features, of which 7 describe the central tendency and 9 describe the spread of the input values. 

In TraceBench, the following critical features of the network throughput traces are identified. \textit{Majority Fast dataset}: Truncated Mean Absolute Change of $5^{th}$ and $95^{th}$ quantile, Truncated Mean Absolute Change of $1.25^{th}$ and $98.75^{th}$ quantile, Truncated Mean of the $5^{th}$ quantile, Truncated Mean of the $12.5^{th}$ quantile, Truncated Mean of the $25^{th}$ quantile, and Variation Coefficient. \textit{Balanced dataset}: Truncated Mean Absolute Change of $5^{th}$ and $95^{th}$ quantile, Truncated Mean Absolute Change of $1.25^{th}$ and $98.75^{th}$ quantile, Truncated Mean of the $5^{th}$ quantile, Truncated Mean of the $12.5^{th}$ quantile, Truncated Mean of the $25^{th}$ quantile, and Variation Coefficient. \textit{Majority Slow dataset}: Autocorrelation with lag 3, Autocorrelation with lag 8, Truncated Mean of the $5^{th}$ quantile, Truncated Mean of the $12.5^{th}$ quantile, Truncated Mean of the $25^{th}$ quantile, and Variation Coefficient. 

In ABR, the following critical features of throughput are identified: Autocorrelation with lag 3, Mean, Spectral Centroid of the Absolute Fourier Transform Spectrum, $2.5^{th}$ quantile, $5^{th}$ quantile, $95^{th}$ quantile, Ratio of values beyond $2.5\times$ standard deviation, Truncated Mean Absolute Change of $5^{th}$ and $95^{th}$ quantile, and Truncated Mean Absolute Change of $1.25^{th}$ and $98.75^{th}$ quantile. 

In Congestion Control, because traces are not time-varying series, but instead a tuple of key simulation values, the tuple is treated as the set of critical features. These key simulation values include Bandwidth, Latency, Max. Queue Size, and Loss.

In Load Balancing, the following features of the incoming job sizes over time are identified as key: Autocorrelation with lag 3, Truncated Mean Absolute Change of $5^{th}$ and $95^{th}$ quantile, Spectral Centroid of the Absolute Fourier Transform Spectrum, Mean, Central approximation of Second Derivative, $5^{th}$ Quantile Truncated Mean, $12.5^{th}$ Quantile Truncated Mean, and Variation Coefficient.

We observe that Plume finds different features to be critical for different datasets. This highlights the ability of Plume to effectively adapt to the distribution of training traces to successfully separate them.

\subsection{Clustering}
We recall that in the Clustering stage of Plume, we group similar traces together to attempt to reduce the complexity of the prioritization problem from a trace-level to a cluster. 

We do this by automatically finding both the clustering and the optimal number of features through a search procedure. In TraceBench, we search for the number of clusters in the range [3, 7]. In ABR, we search in the range [6, 15], in CC, we search in the range [4, 9] and in the range [3, 8] in LB. In Figures~\ref{fig:pts_clustering} and \ref{fig:pts_tiny_abr_clustering}, we visualize the clustering found by Plume. We see that Plume effectively groups and separates traces in all six trace datasets.



\section{TraceBench Details}

\label{sec:appendix_tiny_abr}
In designing TraceBench, our objective is to create an environment to thoroughly evaluate and validate different prioritization techniques.

We build our environment on top of the standard ABR implementation found in the Park Project~\cite{mao2019park}. We allow the client to have a maximum buffer of 15 seconds. We consider traces with a maximum length of 100 seconds, with chunks of 1 second. The chunk sizes are generated by sampling a Gaussian distribution around the bitrates [1.0, 3.0, 6.0] megabytes per second.

When generating the traces, we consider two levels of throughput, fast and slow, and two levels of variance, high-variance and low-variance. When generating a trace, we use a 2-state Markov model switching between high and low throughput with different switching probabilities for each kind of trace. In Figure~\ref{fig:toy_abr_trace_viz}, we present a throughput vs. time visualization of each of the four different kinds of traces.

When training the controllers in TraceBench, we use the state-of-the-art feed-forward DQN algorithm Ape-X Dqn~\cite{horgan2018distributed}. We use framestacking of history length 10. We additionally use a standard reward normalization function~\cite{pohlen2018observe} to normalize the rewards. We use the training parameters defined in Table~\ref{tab:tiny_abr_train_param}. We use a simple fully connected architecture with 2 layers of 256 units. We additionally use the dueling and double DQN architecture with a hidden fully connected layer of 256 units.

\section{Adaptive Bitrate Streaming Details}
\label{sec:appendix_abr}
\begin{figure*}[t]
	\centering

        \includegraphics[width=0.95\linewidth]{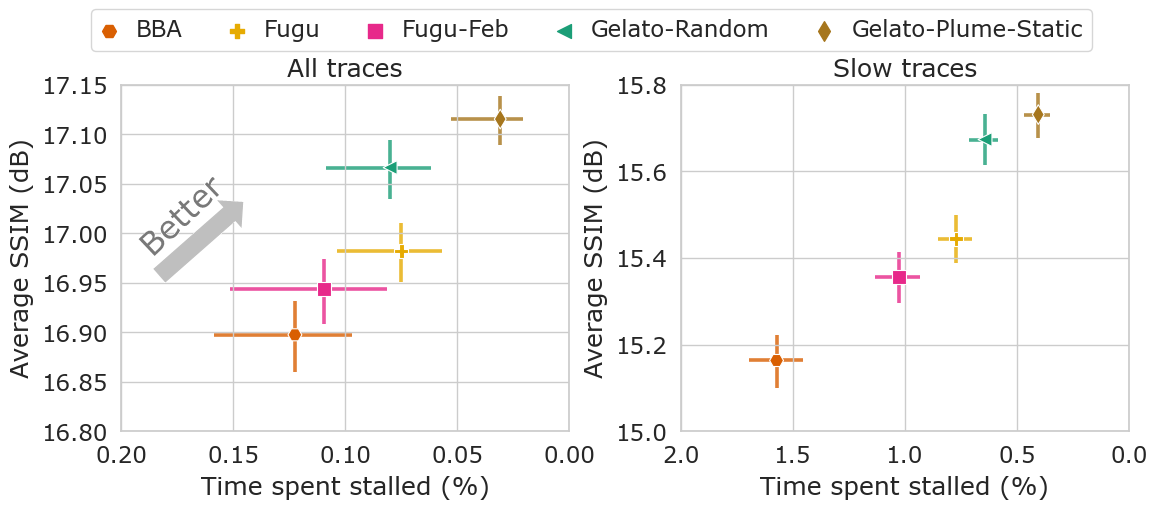}

	\caption{ Performance Plots from the Puffer Platform~\cite{puffer-online}, presenting results from 07 Mar' 2022---05 Oct' 2022. The results visualize 25.5 steam-years of data. Similarly to our main results, we see that Gelato-Plume-Static (maguro) outperforms all other state-of-the-art ABR controllers in both video quality and stalling, and that Gelato-Random (unagi) improves overall video quality while achieving similar stalling performance.}
	\label{fig:puffer_website_plot}
\end{figure*}

\begin{table}[t]
\centering
\begin{tabular}{p{4.8cm}p{2.4cm}}
    \toprule
    \textbf{Hyperparameter} & \textbf{Value} \\
    \midrule
    Learning rate & 0.001\\
    Number of parallel envs. & 64\\
    Number of training steps & $4\mathrm{e}{8}$\\
    Update horizon ($t_{max}$) & 15 env. steps \\
    GAE N-step return & 15 \\
    $\gamma$ & 0.95 \\
    Value function coefficient in loss & 0.9 \\
    Entropy & [5.75, 0.0025] \\
    Entropy annealing interval & $2\mathrm{e}{8}$ steps \\
    Max Gradient Norm & 0.4 \\
    \bottomrule
\end{tabular}
\caption{ Gelato's training hyperparameters. Parameters left unspecified follow the default ones provided in Stable-Baselines3 v2.0~\cite{stable-baselines3} for the A2C algorithm.}
\label{tab:gelato_params}
\end{table}

\begin{table}[t]
\centering
\begin{tabular}{p{4.8cm}p{2.4cm}}
    \toprule
    \textbf{Hyperparameter} & \textbf{Value} \\
    \midrule
    Learning rate & 0.000125\\
    Number of parallel envs. & 16\\
    Number of training steps & $5\mathrm{e}{6}$ \\
    Update horizon ($t_{max}$) & 15 env. steps \\
    GAE N-step return & 15 \\
    $\gamma$ & 0.975 \\
    Value function coefficient in loss & 0.05 \\
    Entropy & [0.1, 0.005] \\
    Entropy annealing interval & $2.5\mathrm{e}{6}$ steps \\
    Max Gradient Norm & 0.25 \\
    \bottomrule
\end{tabular}
\caption{ Aurora's training hyperparameters. Parameters left unspecified follow the default ones provided in Stable-Baselines3 v2.0~\cite{stable-baselines3} for the A2C algorithm.}
\label{tab:aurora_params}
\end{table}

\begin{table}[t]
\centering
\begin{tabular}{p{4.8cm}p{2.4cm}}
    \toprule
    \textbf{Hyperparameter} & \textbf{Value} \\
    \midrule
    Learning rate & $2\mathrm{e}{-4}$\\
    Number of parallel envs. & 16\\
    Number of training steps & $5\mathrm{e}{6}$ \\
    Batch Size & 256 \\
    GAE $\lambda$ & 0.975 \\
    Advantage Normalization & None \\
    N epochs per update & 30 \\
    Value function coefficient in loss & $1\mathrm{e}{-4}$ \\
    Entropy & [0.1, $1\mathrm{e}{-6}$] \\
    Entropy annealing interval & $5\mathrm{e}{6}$ steps \\
    Clip Range & 0.1 \\
    Max Gradient Norm & 0.2 \\
    \bottomrule
\end{tabular}
\caption{ Load Balancing training hyperparameters. Parameters left unspecified follow the default ones provided in Stable-Baselines3 v2.0~\cite{stable-baselines3} for the PPO algorithm.}
\label{tab:lb_params}
\end{table}

\begin{table}[t]
\centering
\begin{tabular}{p{4.8cm}p{2.4cm}}
    \toprule
    \textbf{Hyperparameter} & \textbf{Value} \\
    \midrule
    Number of actors & 4\\
    Number of training steps & $4\mathrm{e}{6}$ \\
    Learning rate & $7.5\mathrm{e}{-6}$\\
    Replay batch size & 32 \\
    $\gamma$ & 0.975 \\
    Replay buffer size & 250000 \\
    N-step return & 7\\
    $\epsilon$ annealing interval & $7\mathrm{e}{5}$ steps\\
    Value clipping & [-32, 32] \\
    \bottomrule
\end{tabular}
\caption{ TraceBench training parameters for Ape-X DQN~\cite{horgan2018distributed}. Parameters left unspecified follow the default ones provided in RLlib v0.13~\cite{liang2018rllib}.}
\label{tab:tiny_abr_train_param}
\end{table}

\begin{table}[t]
\centering
\begin{tabular}{p{4.8cm}p{2.4cm}}
    \toprule
    \textbf{Hyperparameter} & \textbf{Value} \\
    \midrule
    Number of actors & 64\\
    Number of training steps & $1\mathrm{e}{9}$\\
    Learning rate & $7.5\mathrm{e}{-6}$\\
    Replay batch size & 128 \\
    $\gamma$ & 0.95 \\
    Replay buffer size & 2M \\
    N-step return & 7\\
    Value clipping & [-32, 32]\\
    \bottomrule
\end{tabular}
\caption{ Training parameters for the off-policy variant of Gelato. Parameters left unspecified follow the default ones provided in RLlib v0.13~\cite{liang2018rllib}.}
\label{tab:gelato_dqn_parameters}
\end{table}

\label{sec:appendix_abr}
In ABR, we introduce the novel controller architecture Gelato. 

Gelato's neural architecture uses frame-stacking with $10$ past values for the client data, and $5$ future values of chunk sizes and SSIMs at every encoded bitrate. The client data is passed through a $1$D convolution with a kernel size of $3$ and $64$ filters, followed by another $1$D convolution of the same kernel size and filters. The chunk sizes and SSIMs are each passed through their own 1D convolution with a kernel size of $5$ and $32$ filters, each followed by another $1$D convolution with the same kernel size and number of filters. The second layer of convolutions reduces the size of the resulting output by a factor proportional to the size of the kernel. The resulting features are concatenated and passed through a policy and a value network each made up of a single hidden layer of $256$ neurons. Note that the value network is not used outside of training. An inference on Gelato's neural network takes less than $0.35$ ms on average on a core of our $x86-64$ CPU server in Python---a minimal per-chunk overhead for Puffer's $2.002$ second chunk duration. To train Gelato, we use the A2C algorithm~\cite{mnih2016asynchronous} using a standard reward normalization strategy~\cite{pohlen2018observe} and the training parameters defined in Table~\ref{tab:gelato_params}. 

The off-policy DQN variant of Gelato uses the same architecture, swapping the final policy and value networks for a single dueling Q-network made up of a single hidden layer of $256$ neurons. We additionally use a standard reward normalization function~\cite{pohlen2018observe} to normalize the rewards. To train this variant of Gelato, we use the Ape-X DQN algorthm~\cite{horgan2018distributed} using the training parameters defined in Table~\ref{tab:gelato_dqn_parameters}.

We train Pensieve~\cite{pensieve} using its original architecture. However, because the original implementation could only work with the traces provided by the authors, to adapt Pensieve to new traces, we use the same training environment and DRL parameters as Gelato. 

In presenting the results for Gelato in the real world, we re-plot the data found on the Puffer Platform~\cite{fugu} in Figure~\ref{fig:main_results} in Section~\ref{sec:experiments}. In our analysis, we present the data from dates 01 Oct' 2022 through 01 Oct 2023. However, because the platform was experiencing issues and benchmarking other ABR controllers, this data is split across multiple plots. To aggregate the data together, we first download the pre-processed public data available from the Puffer Website~\cite{puffer-online}. Second, we follow the same technique used by the platform, and employ a sampling-based approach to estimate the mean and $95\%$ confidence interval of quality, quality change and stalling for each ABR algorithm. We ignore all the days when the platform was under maintenance (such as 16 January 2023) and days when the platform produced faulty data due to a known bug in the code (such as 21 January 2023). 

For completeness, we present the older results from the Puffer Platform in Figure~\ref{fig:puffer_website_plot} benchmarking the original version of the Fugu controller, which was taken off the platform on 06 October 2022. In this plot, we analyze $25.5$ stream-years of data, collected over the period 07 March 2022 through 05 October 2022. We observe that Gelato-Plume-Static still outperforms the state-of-the-art ABR algorithms in both quality and stalling. This result highlights how Plume can successfully train robust and high performant controllers in simulation, even outperforming in-situ trained controller updated daily. 

\section{Congestion Control Details}
\label{sec:appendix_cc}
In CC, we train and evaluate Aurora~\cite{aurora} with different prioritization techniques. We use framestacking with a history length of 25. We use a 2-layer fully connected neural architecture with 64 units for both the policy and value function. We additionally use State-Dependent noise for exploration~\cite{raffin2022smooth} and reward scaling. We use the training parameters defined in Table~\ref{tab:aurora_params} with the algorithm A2C~\cite{mnih2016asynchronous}.  

\section{Load Balancing Details}
\label{sec:appendix_lb}
In LB, we evaluate different prioritization techniques using standard parameters. We use a 2-layer fully connected neural architecture with [256, 128] units and GeLU activation~\cite{hendrycks2016gaussian} for both the policy and value function. We additionally use reward scaling, and the training parameters defined in Table~\ref{tab:lb_params} with the algorithm PPO~\cite{ppo}. 

\end{document}